\def\eqref#1{equation~\ref{#1}}
\def\1{\bm{1}}
\DeclareMathAlphabet{\mathsfit}{\encodingdefault}{\sfdefault}{m}{sl}
\SetMathAlphabet{\mathsfit}{bold}{\encodingdefault}{\sfdefault}{bx}{n}
\newtheorem{proposition}{Proposition}
\newtheorem{remark}{Remark}
\newtheorem{lemma}{Lemma}
\newcommand{\DEHNN}  {{\sf{DE-HNN}\xspace}}
\newcommand{\myparagraph}[1] {{\vspace*{0.07in}\noindent{\bf #1~}}}
\newcommand{\ahyperE} {{\sigma}}
\newcommand{\sinkset}  {{\mathsf{S}}}
\newcommand{\acell}    {{\mathsf{c}}}
\newcommand{\anet}  {{\sigma}}
\newcommand{\myC}  {{\mathcal{C}}}
\newcommand{\myNets} {{\mathcal{N}}}
\newcommand{\adH}  {{\overrightarrow{{H}}}}
\newcommand{\hE}   {{\overrightarrow{\Sigma}}}
\newcommand{\NN}  {{\mathcal{I}}}
\newcommand{\dr}   {{\mathsf{v}}}
\newcommand{\source} {{\dr}}
\newcommand{\anetlist}  {{\mathcal{H}}}
\newcommand{\avn}   {{\omega}}
\newcommand{\Fcal}  {{\mathcal{F}}}
\newcommand{\denselist}{\itemsep 0pt\parsep=1pt\partopsep 0pt}
\newcommand{\msetL} {{\{\!\{}}
\newcommand{\msetR} {{\}\!\}}}
\begin{document}

\runningauthor{Luo, Hy, Tabaghi, Defferrard, Rezaei, Carey, Davis, Jain, Wang}

\twocolumn[

\aistatstitle{\DEHNN: An effective neural model for Circuit Netlist representation}

\aistatsauthor{ 
Zhishang Luo$^1$ \And 
Truong Son Hy$^2$ \And  
Puoya Tabaghi$^1$ \And
Donghyeon Koh$^4$ \And
Michael Defferrard$^4$ \AND
Elahe Rezaei$^3$ \And
Ryan Carey$^3$ \And
Rhett Davis$^5$ \And
Rajeev Jain$^3$ \And
Yusu Wang$^1$
}

\aistatsaddress{ 
$^1$ University of California San Diego \And 
$^2$ Indiana State University \AND
$^3$ Qualcomm Technologies, Inc. \And
$^4$ Qualcomm Wireless GmbH \And
$^5$ North Carolina State University
} 
]

\begin{abstract}
     The run-time for optimization tools used in chip design has grown with the complexity of designs to the point where it can take several days to go through one design cycle which has become a bottleneck. Designers want fast tools that can quickly give feedback on a design. Using the input and output data of the tools from past designs, one can attempt to build a machine learning model that predicts the outcome of a design in significantly shorter time than running the tool. The accuracy of such models is affected by the representation of the design data, which is usually a netlist that describes the elements of the digital circuit and how they are connected. Graph representations for the netlist together with graph neural networks have been investigated for such models.  However, the characteristics of netlists pose several challenges for existing graph learning frameworks, due to the  large number of nodes and the importance of long-range interactions between nodes. To address these challenges, we represent the netlist as a directed hypergraph and propose a  \emph{Directional Equivariant Hypergraph Neural Network} (\DEHNN) for the effective learning of (directed) hypergraphs. Theoretically, we show that our \DEHNN{} can universally approximate any node or hyperedge based function that satisfies certain permutation equivariant and invariant properties natural for directed hypergraphs.  We compare the proposed \DEHNN{} with several State-of-the-art (SOTA) machine learning models for (hyper)graphs and netlists, and show that the \DEHNN{} significantly outperforms them in predicting the outcome of optimized place-and-route tools directly from the input netlists. Our source code and the netlists data used are publicly available at \url{https://github.com/TILOS-AI-Institute/DEHNN.git}.

\end{abstract}

\section{Introduction}
\label{sec:Intro}

Chip design is a complicated process involving numerous steps, many of which involve solving hard optimization problems. Just consider the stage of the \emph{place and route} of a synthesized netlist:  Here the input is a netlist consisting of \emph{cells} and \emph{nets}, where cells refer to functional units such as logic gates, and nets refer to connections between cells. The goal is to produce a layout of this netlist in a specific 2D region, where gates are placed and connections among them are realized by wires laid out across multiple layers (called ``routed''), all while aiming to optimize multiple key properties (e.g, minimizing total wirelength and reducing congested ``hotspots'').  This place-and-route stage is highly nontrivial to solve for large netlists, and requires a time-consuming process in practice with multiple stages and iterations. There therefore arises the need for data-driven methods to predict properties of a design directly without the time-consuming place and routing process. To this end, graph neural networks become natural choices, given that the netlists are often represented
as a graph or a hypergraph. In this paper, we aim to develop an efficient and effective graph learning architecture to predict post-routing properties (e.g., wirelength or congestion) for a synthesized netlist accurately. The past decade has witnessed a tremendous growth of graph learning models. Two most popular families are  (i) message-passing neural networks (MPNNs) \citep{10.5555/3305381.3305512, Jegelka22}, and (ii) transformer based approaches (e.g, survey \citep{mueller2023attending}). However, netlists present several challenges for existing graph learning architectures: (i) their size can be massive, from hundreds of thousands to millions of nets, (ii) long-range interactions are important (e.g., properties of interest might be caused by long paths and other long-range interactions), and (iii) properties of post-routing netlists seem to depend on complex information of graph topology beyond simple statistics such as in-/out-degrees (distributions). 

Unfortunately, it is challenging for popular MPNNs to capture long-range interactions, due to issues such as over-smoothing of graph signals \citep{Chen_Lin_Li_Li_Zhou_Sun_2020} and oversquashing bottlenecks \citep{topping2022understanding}. MPNN's ability in capturing graph motifs (e.g cycles and trees) and higher-order structures is also limited \citep{xu2018how,Jegelka22,hy2019covariant}.  Transformer-based graph neural models appear to be more effective in capturing long-range interactions \citep{NEURIPS2022_8c3c6668, 10.1063/5.0152833}. However, each transformer layer typically takes time quadratic to the number of nodes. While there are sparse transformers \citep{10.5555/3524938.3525416, 10.1145/3530811}, their representation power is also reduced. Furthermore, the primary ways for a graph transformer to capture input graph topology have been either via initial position/structure encoding of nodes, or via the use of certain pairwise graph distances to ``reweight'' the attention \citep{zhang2023rethinking}. In general, it is not clear how sensitive a transformer based model is to features in input graph topology (e.g., specific paths which can be important to netlists properties).   

\myparagraph{Our work.}
In this paper, similar to \citep{10.1145/3394885.3431562}, we model a netlist as a directed hypergraph and present a novel \emph{Directional Equivariant Hypergraph Neural Networks} (\DEHNN) for the effective learning of (directed) hypergraphs. In particular, \DEHNN{} can be used to predict properties (e.g., congestion or net-wirelength) of a post-routed netlist directly from an input netlist {\bf before} performing the lengthy place-and-route process. 
Our \DEHNN{} incorporates several new ideas to address the aforementioned challenges posed by netlists. 
Our contributions are as follows: 

\begin{compactitem}   

    \item We advocate for the modeling of a netlist as a directed hypergraph. Indeed, a net usualy consists of a driver gate/cell $c$, togehter with a set $S$ of ``sinks''; see Section~\ref{sec:background} and Figure~\ref{fig:netlist}. Recognizing the difference between the driver gate and sinks in the timing of a routed net, inspired by \citep{10.1145/3394885.3431562},  we represent a net as a \emph{directed-hyperedge} $(c, S)$ to separate the roles of driver and sinks cells. 

    \item We propose a learning model \DEHNN{} for directed hypergraphs. Theoretically, we show (Theorem \ref{thm:nested_permutation_invariant}) that our \DEHNN{} can universally approximate any node or hyperedge based function that satisfies certain permutation equivariant and invariant properties natural for directed hypergraphs. 
   
\item  On the practical front, to mitigate the issue of large size of and long-range interactions in input netlists, we use a hierarchy of virtual nodes (VNs), which provides additional ``bridges'' to allow the integration of both local and global information while still maintaining original graph topology (unless in a graph pooling approach); see Figure~\ref{fig:hvn} and Section~\ref{sec:DEHNN}.  

    \item To make our initial node features more informative, in addition to using Laplacian eigenvectors to provide position encoding as in \citep{NEURIPS2022_5d842367, NEURIPS2022_5d4834a1}, we also use a topological summary called persistent homology \citep{EH10,DW22}, which can be used to encode the ``shape'' of graph motif {\bf around each node} in a multi-scale manner (e.g., \citep{zhao2020persistence,yan2021link}).  
\item We compare our \DEHNN{} with several SOTA machine learning models for (hyper)graphs and netlists. Our model significantly outperforms them in predicting different properties of post-routed netlists directly from input netlists. 

We provide careful ablation studies to demosntrate the utilities of the use of directed hyperedge, (hierarchical) VNs and persistence-based topological summaries. Finally, we remark that ML research for chip design currently suffers from the scarcity of open-source benchmark datasets\footnote{Previous netlist property prediction work sometimes releases the input netlist designs, which our paper also uses. However, they do not release the resulting properties nor the post place-and-route netlists due to the use of commercial tools.}. We hope our datasets (which will be made publicly available at \url{https://github.com/TILOS-AI-Institute/DEHNN.git}) can help bridge this gap. These netlists (of sizes from 400K to 1.3M) can also serve as benchmark for long-range graph interactions for machine learning researchers. 
\end{compactitem}

While in this paper, we design \DEHNN{} with the goal of netlists representation and learning, our architecture as well as its theoretical results are general and applicable to any directed hypergraphs.  

\myparagraph{Related work on machine learning models for netlists.}

Earlier machine learning (ML) approaches for netlist property (e.g, congestion) prediction assume that the placement of cells (logic gates) are already given, that is, the input are placed, but not yet routed. They then convert the input to a 2D image or other 2D grid-based representations to predict the final congestion map over this region using models such as convolutional neural networks \citep{8465835,10.1145/3400302.3415662,8587655, 10.1145/3465373, 10.1145/3372780.3375560, 10158384,9045178,9467312,10.1145/3489517.3530675}. 
However, the placement information is itself very time-consuming to obtain. Furthermore, representing the input as a 2D image makes it hard to capture local and global connectivity information in the netlists.

Since a circuit is represented more accurately as a (hyper)graph, recent work deploys graph neural networks (GNNs) for congestion prediction. %
The work of \citep{8920342} constructs a homogeneous 
graph representation of netlist in which each node corresponds to a cell, and if two cells are connected by a net then there exists an edge between those nodes in the graph, and then applies Graph Attention Networks (GAT) \citep{gat2018graph}. 
Later follow-up work includes using node embedding computed from partitioned subgraph (to capture more global graph structure) \citep{10.1109/ICCAD51958.2021.9643446} and using dual graph with both cell and net features \citep{10.1145/3394885.3431562}. 
Note that this convertion of a net to a clique can lead to very large sized cliques, and also loses net-specific topological information. 
The SOTA
approach for netlists representation is 
\citep{NEURIPS2022_7fa54815}, which introduces a heterogeneous (i.e., different edge types) graph construction, called \emph{circuit graph}, in which both cells and nets are represented as nodes of a bipartite graph. 

All these approaches still have the issues of capturing long-range interaction essentially for netlists. As we describe in ``Our contributions'', our \DEHNN{} deploys a suitable architecture, hierarchical virtual nodes, as well as informative persistent homology features, to build an effective graph learning model for netlists (and other directed hypergraphs). 
Note that similar to \citep{8920342,NEURIPS2022_7fa54815}, our \DEHNN{} performs learning on netlists {\bf without} placement information. If placement informtion is available, they can easily be added to initial node position encoding.

\section{Modeling netlists as directed hypergraphs}
\label{sec:background}

\paragraph{Circuit netlists.}
A circuit netlist is a textual representation of electronic components, such as logical boolean gates, and the connections between them. 
A (pre-placed, also called synthesized) netlist $\anetlist$ consists of a collection of {\bf cells} (logic gates) $\myC = \{\acell_1, \ldots, \acell_n\}$, and a set of {\bf nets} $\myNets = \{\anet_1, \ldots, \anet_m \}$. 
Each cell (gate) has a certain number of input pins and an output pin. The number of input pins is decided by the type of this gate (e.g., an AND gate takes two inputs). For every gate, its output will flow into the input pins of a collection of other gates. 
This information is captured by the concept of {\bf net}, where a net $\anet \in \myNets$ consists of the output pin of a certain cell, called its \emph{driver cell} denoted by $\dr_\anet \in \myC$, together with all those \emph{sink cells}, denoted by $\sinkset_\anet \subseteq \myC$, where the signal from this output pin will flow into. 
In other words, a net can be represented by a tuple $\anet = (\source_\anet, \sinkset_\anet)$.  %
See Figure~\ref{fig:netlist}~(a). 

Given such a netlist, standard chip design pipelines will first lay it out in the physical space (\emph{placement}). Then in the \emph{routing stage}, the connection from output pin of one cell to the input of other cells are mapped to the routing channels within the chip's physical floorplan. Among other properties, one wishes to minimize the total wirelength of each net, and to reduce routing ``congestions'', which occurs when the number of edges to be routed in a specific region of the floorplan exceeds the available routing capacity. See Figure~\ref{fig:circuit_diagram} in the Supplement for an illustration of a placed-and-routed netlist. 

\myparagraph{Directed hypergraphs.}
The standard \emph{hypergraph $H$} is a tuple $(V, \Sigma)$, where $V$ is a set of nodes, $\Sigma$ is a set of \emph{hyperedge}, and each $e\in \Sigma$ is a subset of $V$; i.e., $e\subseteq V$. 
A \emph{directed hypergraph} $\adH$ is a tuple $(V, \hE)$, where each \emph{directed hyperedge $\ahyperE \in \hE$} consists of an ordered pair $\ahyperE = (v_\ahyperE, S_\ahyperE)$ with $v_\ahyperE \in V$ and $S_\ahyperE \subseteq V$. 
It is easy to see that a netlist $(\myC, \myNets)$ can be naturally viewed as a directed hypergraph where we have: {\bf cell $\Leftrightarrow$ node}, and {\bf net $\Leftrightarrow$ directed hyperedge}. See Figure~\ref{fig:netlist} (b).

In what follows, we often use cells / nodes, as well as nets / directed hyperedges, interchangeably. In fact, for simplicity, we often use the terms ``{\bf nodes}'' and ``{\bf nets}'' as they are more concise. 
We will also refer to $\dr_\ahyperE$ and $\sinkset_\ahyperE$ from a directed hyperedge $\ahyperE = (\dr_\ahyperE, \sinkset_\ahyperE)$ as its \emph{driver} and its \emph{sinks}, respectively, just like in a net $\ahyperE$. 

Given a net $\sigma \in \hE$, we say that it {\it contains} a node $v\in V$, denoted by $v\in \sigma$, if $v$ is either the driver, or a sink of $\sigma$. 
Given a node $v\in V$, we say that a net $\sigma$ is \emph{incident on $v$} if $\sigma$ contains $v$. The collection of nets incident to $v$ is called the \emph{incident-net-set} of $v$, denoted by $\NN(v) = \{ \sigma' \in \hE \mid v \in \sigma' \}$. For example, in Figure~\ref{fig:netlist} (b), the incident-net-set of $v_3$ is $\NN(v_3) = \{\anet_1, \anet_2, \anet_5\}$. 

\begin{figure}[tp]
    \centering
\begin{tabular}{cc}
\includegraphics[height=1.8cm]{./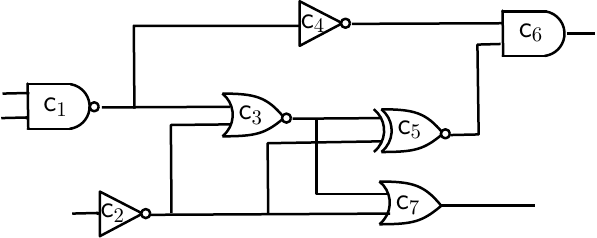} & \includegraphics[height=2.0cm]{./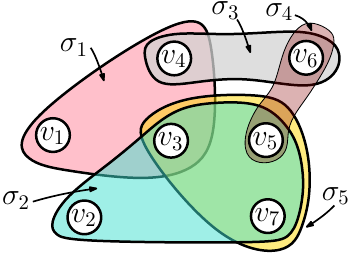} \\
(a) & (b)
\end{tabular}
    \caption{{\small (a) A netlist with $7$ cells $\myC = \{\acell_1, \ldots, \acell_7\}$ and 5 nets. For example, the output of gate $c_2$ flows into cells $c_3, c_5,$ and $c_7$, giving rise to the net $\anet = (c_2, \{c_3,c_5, c_7\})$. That is, the driver cell of $\anet$ is $\dr_{\anet} = c_2$, while its sink-set being $\sinkset_\anet = \{c_3, c_5, c_7\}$. (b) The corresponding directed hypergraph with $7$ nodes and $5$ hyperedges. Each node $v_i$ corresponds to cell $\acell_i$, and each hyperedge is marked as a shaded region.}}
    \label{fig:netlist}
\end{figure}

In the chip design literature, a netlist is oftentimes represented either (1) as a (directed) graph where two cells are connected if they belong to a common net; or (2) as a standard hypergraph where a net is a hyperedge consisting of the union of the driver cell and all sink cells. 
The former can lead to huge cliques (as some nets can consist of large number of cells) and also lose sensitivity to the net topology. 
In the learning context, the work of \citet{10.1145/3394885.3431562} first separated the role of the driver and sinks of a net in the representation of a netlist and provided justification for this choice. Their final representation is still a graph representation that is intuitively a directed version of the so-called \emph{line graph} for a hypergraph. However, converting a hypergraph to a line graph is a lossy process. The directed hypergraph provides a more informative and cleaner representation: it both preserves the full net information and differentiates between drivers and sinks.

\section{\DEHNN{}: a neural network for directed hypergraphs}
\label{sec:DEHNN}

In this section, we first describe a basic neural network (NN) model in Section~\ref{subsec:baseDEHNN}, which we refer to as base-\DEHNN{}, for the representation learning of directed hypergraphs. We provide theoretical justification of this model in Section~\ref{subsec:universalapprox}. Then in Section~\ref{subsec:augmentation}, we describe how to augment this base model to make it more effective at capturing long-range interactions as well as the multi-scale graph topology. 

\subsection{Base-\DEHNN{}}
\label{subsec:baseDEHNN}

Our base-\DEHNN{} uses message-passing mechanisms. However, it differs from standard MPNN (message-passing neural networks) \citep{10.5555/3305381.3305512} in how the messages are aggregated and updated, so as to process node and net based features, as well as to respect the direction of hyperedges. 

More specifically, base-\DEHNN{} consists of $L$ layers. Consider an input directed hypergraph $\adH = (V, \hE)$. For the $\ell$-th layer, each node $v \in V$ (resp.\ each net/directed hyperedge $\ahyperE \in \hE$) will maintain a node feature (resp.\ net feature) denoted as $m^{\ell}(v)$ (resp.\ $M^{\ell}(\ahyperE)$). For simplicity, assume that $m^{\ell}(v), M^{\ell}(\ahyperE) \in \mathbb{R}^{d_\ell}$ are $d_\ell$-dimensional vectors.  
Assume first that our final goal is to predict net properties.
The base-\DEHNN{} will compute $m^{\ell}(v)$ and $M^{\ell}(\ahyperE)$ using feature representations from the ($\ell-1$)-th layers by the following two steps: 

\noindent {\bf [Node Update]:} First, the features of each {\bf node} (cell) $v\in V$ is updated using features of the set of those {\bf nets} containing it, that is, via the features of those nets in the incident-net-set $\NN(v)$ of $v$
as follows:  
    \begin{equation}\label{eq:netlist2} 
m^{\ell}(v) =  \mathrm{Agg}^\ell_{\ahyperE \rightarrow v} ( \msetL M^{\ell - 1}(\sigma^{\prime} )\msetR_{\sigma^{\prime} \in \NN(v)} ),
\end{equation}
where $\msetL \cdot \msetR$ denotes a \emph{multiset} as some neighboring nets could have identical feature representations. The function $\mathrm{Agg}_{\ahyperE \rightarrow v}$ operates on a multiset and should be \emph{invariant} to the order of neighboring nets of $v$ in $\NN(v)$. 
Similar to the Deep Set architecture \citep{NIPS2017_f22e4747} which can hancle such \emph{permutation invariance} in multisets, we implement the function $\mathrm{Agg}_{\ahyperE \rightarrow v}$ by: 
\begin{equation} \label{eq:node-update-implementation}
m^{\ell}(v) = \sum_{\sigma^{\prime} \in \NN(v)} \text{MLP}_1^\ell\big( M^{\ell - 1}(\sigma^{\prime}) \big), 
\end{equation}
where $\text{MLP}_1$ stands for a multi-layer perceptron. For example, the update of node feature for $v_4$ in Figure~\ref{fig:netlist}~(b) is 
$m^{\ell}(v_4) = \text{MLP}_1^\ell\big( M^{\ell - 1}(\sigma_1)\big) + \text{MLP}_1^\ell\big( M^{\ell - 1}(\sigma_3)\big)$
as $\NN(v_4) = \{\anet_1, \anet_3\}$. 
It is easy to see that the update in Eqn~(\ref{eq:node-update-implementation}) satisfies the needed permutation invariance. 

\noindent{\bf [Net Update]:} Next, the features of each {\bf net} $\ahyperE = (\dr_\ahyperE, \sinkset_\ahyperE)$ is updated using the new node features for those {\bf nodes} contained in $\ahyperE$. Since the net (hyperedge) $\ahyperE$ is directed, we wish to separate the roles of its driver node $\dr_\ahyperE$ and the set of sinks $\sinkset_\ahyperE$, that is, 
\begin{equation}\label{eq:netlist}
M^{\ell}(\ahyperE) =  \mathrm{Agg}^\ell_{v \rightarrow \ahyperE}( m^\ell(\dr_\sigma), \msetL m^\ell(v^{\prime})\msetR_{v^{\prime} \in \sinkset_\ahyperE} )
\end{equation}
However, the update should {\bf not} depend on the ordering of nodes in the sink set $\sinkset_\ahyperE$, i.e., $\mathrm{Agg}_{v \rightarrow \ahyperE}$ needs to be \emph{permutation invariant} w.r.t. its second parameter, the multiset $\msetL m^{\ell}(v^\prime)) \msetR_{v^\prime \in \sinkset(\sigma)}$. %
We use the following to implement Eqn~(\ref{eq:netlist}) to guarantee the needed permutation invariance of $\mathrm{Agg}_{v \rightarrow \ahyperE}$ w.r.t. the ordering of nodes in $\sinkset_\ahyperE$:
\begin{align}\label{eq:net-update-implementation}
M^{\ell}(\ahyperE) = & \, \text{MLP}^\ell_3 \bigg[ m^{\ell}(\dr_{\ahyperE}) \oplus \bigg( \sum_{v^{\prime} \in \sinkset_\ahyperE} \text{MLP}^\ell_2(m^{\ell}(v^\prime)) \bigg) \bigg] %
\end{align}

where $\text{MLP}_2$ and $\text{MLP}_3$ are multi-layer perceptrons, and $\oplus$ denotes vector concatenation. 
For example, in Figure~\ref{fig:netlist}, the update of $\anet_1 = (v_1, \{v_3, v_4\})$ is $M^{\ell}(\ahyperE_1) = \text{MLP}^\ell_3 \big( m^{\ell}(v_1) \oplus (\text{MLP}^\ell_2(m^{\ell}(v_3)) + \text{MLP}^\ell_2(m^{\ell}(v_4)) \big)$. 

We note that if the target task is to predict node-level features (instead of net-level features), then we will apply dual update rules where the roles of nets and nodes will be swapped. Finally, in our implementation, we also add residuals (i.e, node / net features from the previous level) to each node / net features during the updates.
There are $L$ such layers, and in the end, if the given task is a regression task, then another linear layer $\psi$ is applied to the values $m^L(\cdot)$ (resp.\ $M^L(\cdot)$) to generate the final node-based (resp.\ net-based) regression function. During the training process, the loss function in this case is the standard mean squared error; that is, if it is a node-based regression with ground-true function $\mathrm{y}: V\to \mathbb{R}$, then we have: 
\begin{equation}
\mathcal{L}(\bm{\theta}) = \frac{1}{|V|} \sum_{v \in V} [\psi(m^L(v)) - \mathrm{y}(v)]^2,
\label{eq:loss}
\end{equation}
where $\bm{\theta}$ denotes the set of all learnable parameters in the entire base-\DEHNN{} model.
Node or net classification tasks are handled similarly but with the cross-entropy loss. %

\subsection{Theoretical analysis of \DEHNN{}}
\label{subsec:universalapprox}

We now provide some theoretical guarantee of our base-\DEHNN{} model. 
For simplicity, in what follows we assume that our target (regression) functions are net-valued functions\footnote{The net-valued function is more natural for properties such as net wirelength and congestion.} (or simply \emph{net-functions}) of the form $F: \hE \to \mathbb{R}$. 
Our main result below holds for node-valued functions via a symmetric argument. 

Let us consider one update stage at a fixed layer $\ell \in [1, L]$. 
From the  net-function perspective, the goal of the update stage is the following: At the beginning of this stage, we start with input net features $\mu_\anet := M^{\ell-1}(\anet)$, for all $\anet \in \hE$. In the end, we obtain a set of new features $\mu^{*}_\anet := M^{\ell}(\anet)$ for each $\anet \in \hE$. If we view this feature update as a function on net features, then ideally, for any fixed $\anet \in \hE$, its new feature should depend on the features of all its \emph{neighboring nets} which are those nets that share at least one node with $\anet$. However, note that the neighbors of $\anet$ are naturally classified to two families: 
\begin{description}\denselist 
    \item[(type-A)] those in the set $\NN(\dr_\anet)$ (recall $\NN(v)$ is the set of nets that contains node $v$), which are connected to $\anet$ via the driver node $\dr_\anet$ of $\anet$; and 
    \item[(type-B)] those in $\bigcup_{v^{\prime} \in \sinkset_\ahyperE} \NN(v^{\prime}) $, where each set $\NN(v^{\prime})$ are those nets connected to $\anet$ via a sink node $v^\prime$ of $\anet$ (i.e., $v^\prime \in \sinkset_\anet$). Note that $\Big\{ \NN(v^{\prime})  \Big\}_{v^{\prime} \in \sinkset_\ahyperE}$ is {\bf a set of sets} of neighboring (type-B) nets of $\anet$.     
\end{description}
For example, in Figure~\ref{fig:netlist}, for $\sigma_5 = \{v_3, \{v_5, v_7\}\}$, its (type-A) neighbors are $\NN(v_3) = \{\anet_1, \anet_2, \anet_5\}$, while (type-B) is a set of sets $\{ \{\anet_2,\anet_4,\anet_5\}, \{\anet_2, \anet_5\}\}$. 
It is natural to model the desired update function for the directed hyper-edges as follows, which differentiate (type-A) and (type-B) neighbors of $\anet$: 
\begin{align} \label{eq:M}
    \mu^*_\anet &= 
    \mathcal{F} \Big( \msetL \mu_{\anet'} \msetR_{\anet' \in \NN(\dr_\ahyperE)} , \Big\{\!\Big\{ \msetL  \mu_{\anet'} \msetR_{\anet' \in \NN(v^{\prime})}  \Big\}\!\Big\}_{v^{\prime} \in \sinkset_\ahyperE}  \Big) 
\end{align}

Note that the first variable, $\msetL \mu_{\anet'} \msetR_{\anet' \in \NN(\dr_\ahyperE)}$, consists of the set of input feature representations of (type-A) neighbors of $\anet$, while the second variable, $\Big\{\!\Big\{ \msetL  \mu_{\anet'} \msetR_{\anet' \in \NN(v^{\prime})}  \Big\}\!\Big\}_{v^{\prime} \in \sinkset_\ahyperE}$,
is {\bf a set of sets\footnote{For concision, we use ``set'' instead of ``multiset'' here.} (of net features)}, constituting multisets of feature representations of those (type-B) neighbors of $\anet$. 
The function $\mathcal{F}$ should be invariant not only to the order within each individual set $\NN(u)$ for some cell $u$, but also invariant to the order of nodes in the sink-set $\sinkset_\anet$, i.e., the order of sets $\{  \mu_{\anet'} \}_{\anet' \in \NN(v^{\prime})}$'s within the outer-set in the second variable of $\Fcal$. 
We refer to such a function $\mathcal{F}$ as {\bf nested-permutation invariant}.

Now recall that in our base-\DEHNN{}, at each layer we perform update of node-feature by features of its incident nets as described in Eqn~(\ref{eq:node-update-implementation}), followed by the update of net-features by the new features of those nodes contained in $\anet$ as specified in Eqn~(\ref{eq:net-update-implementation}). 
In other words, one can think of the update of net features from $M^{\ell-1}(\cdot)$ to $M^\ell(\cdot)$ to be the composition of updates in Eqns~(\ref{eq:node-update-implementation}) and (\ref{eq:net-update-implementation}). By construction, it is easy to see that the composition of these two udpate steps indeed gives rise to a nested-permutation invariant function.
However, can any nested-permutation invariant net-function be represented (or approximated) by such a composition of two steps?  
A priori, the answer to this opposite direction is not clear at all as the iterative updates factored through the node features appears more restrictive. Our main theorem below shows that the answer is in fact positive. 

\begin{restatable}[Simplified]{theorem}{goldbach}
\label{thm:nested_permutation_invariant}
Let $\Fcal$ be any continuous, nested-permutation invariant, net-value function as in Eqn~(\ref{eq:M}). For simplicity, assume both input nets and output of $M$ take values in a compact set $\mathcal{B} \subset \mathbb{R}^d$, 
    a connected compact subset of $\mathbb{R}^d$. 
    Then we have that $\Fcal$ can be rewritten as the following sum-decomposition 
\begin{align}
     \forall \sigma: ~& \Fcal\Big( \{ \mu_{\anet'} \}_{\anet' \in \NN(\dr_\ahyperE)} , 
\Big\{ \{  \mu_{\anet'} \}_{\anet' \in \NN(v^{\prime})}  \Big\}_{v^{\prime} \in \sinkset_\ahyperE}  \Big) \nonumber \\
= ~& \rho \Big( \sum_{ \sigma^{\prime} \in \NN(\dr_\ahyperE) } \phi_1 (\mu_{\sigma^{\prime}}),
        \sum_{v^{\prime} \in \sinkset_\ahyperE}\phi_2\big( \sum_{ \sigma^{\prime} \in \NN(v^{\prime}) } \phi_1 (\mu_{\sigma^{\prime}}) \big) \Big)  \label{eq:sumdecomposition}
    \end{align} 
    where $\phi_1: \mathbb{R}^d \rightarrow \mathbb{R}^{d^{\prime}}$, $\phi_2: \mathbb{R}^{d^{\prime}} \rightarrow \mathbb{R}^{d^{\prime \prime}}$, and $\rho$ are continuous functions.
\end{restatable}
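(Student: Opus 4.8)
The plan is to reduce the claim to two nested applications of the sum-decomposition (Deep Sets) principle of \citet{NIPS2017_f22e4747}, which guarantees that a continuous multiset function over a compact domain admits an \emph{injective} sum-pooling representation. The overall strategy has three movements: (1) build a single inner embedding $\phi_1$ whose sum-pooling injectively encodes any multiset of net-features drawn from $\mathcal{B}$, and use this \emph{same} $\phi_1$ both on the type-A multiset and on each inner type-B multiset; (2) build a second embedding $\phi_2$ whose sum-pooling injectively encodes the outer set formed by the sinks' inner encodings; and (3) argue by compactness that the combined encoding is a homeomorphism onto its image, so $\Fcal$ factors continuously through it, yielding $\rho$.

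Concretely, first I would invoke the sum-pooling injectivity result to obtain a continuous $\phi_1 : \mathbb{R}^d \to \mathbb{R}^{d'}$ such that $X \mapsto \sum_{x \in X} \phi_1(x)$ is injective on multisets over $\mathcal{B}$. Since the type-A family $\msetL \mu_{\anet'} \msetR_{\anet' \in \NN(\dr_\ahyperE)}$ and each inner type-B family $\msetL \mu_{\anet'} \msetR_{\anet' \in \NN(v')}$ are all multisets over the same domain $\mathcal{B}$, this one $\phi_1$ serves both roles (which is why the same $\phi_1$ appears twice in Eqn~(\ref{eq:sumdecomposition})). I set $q_A = \sum_{\anet' \in \NN(\dr_\ahyperE)} \phi_1(\mu_{\anet'})$ and, for each sink $v' \in \sinkset_\ahyperE$, $q_{v'} = \sum_{\anet' \in \NN(v')} \phi_1(\mu_{\anet'})$. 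Because $\mathcal{B}$ is compact and $\phi_1$ is continuous, all the $q_{v'}$ lie in a common compact set $\mathcal{K} \subset \mathbb{R}^{d'}$. I then apply the sum-pooling result a second time on $\mathcal{K}$ to get a continuous $\phi_2 : \mathbb{R}^{d'} \to \mathbb{R}^{d''}$ whose sum-pooling is injective on multisets over $\mathcal{K}$, and set $r_B = \sum_{v' \in \sinkset_\ahyperE} \phi_2(q_{v'})$. The two permutation invariances demanded by the theorem are exactly what make these quantities well-defined: invariance within each $\NN(u)$ is absorbed by the inner sum, and invariance across the sinks (the order of the inner sets) is absorbed by the outer sum. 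By the two injectivity properties stacked together, the map sending the input pair to $(q_A, r_B)$ is injective.

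I would close with a standard compactness-plus-extension step. The space of admissible inputs is compact (a finite quotient of a product of copies of the compact $\mathcal{B}$), and the encoding $(q_A, r_B)$ is continuous and injective, hence a homeomorphism onto its compact image, so its inverse is continuous there. Since $\Fcal$ is nested-permutation invariant, it descends to a well-defined continuous function on this image; defining $\rho$ to be its composition with the continuous inverse yields a continuous $\rho$ on the image, which the Tietze extension theorem extends continuously to all of $\mathbb{R}^{d'} \times \mathbb{R}^{d''}$. By construction $\Fcal$ then equals the right-hand side of Eqn~(\ref{eq:sumdecomposition}).

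The main obstacle is the first movement: securing a \emph{single} continuous $\phi_1$ whose sum-pooling is simultaneously injective on every relevant multiset, while controlling the latent dimension $d'$. The clean finite-dimensional statement needs a bound on multiset cardinalities (finite-dimensional sum-pooling is known to be unable to injectively represent arbitrarily large sets), so I would make the cardinality bound explicit and take $d'$ at least as large as the largest incident-net-set $|\NN(u)|$ appearing in $\adH$. The second, more delicate point is that nested injectivity must not collapse two distinct inputs to the same $(q_A, r_B)$; this is precisely where injectivity of \emph{both} levels, rather than either one alone, is required, and where the compactness of $\mathcal{B}$ (and hence of $\mathcal{K}$) is essential to upgrade injectivity into a continuous inverse.
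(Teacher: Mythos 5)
Your proposal is correct and follows essentially the same route as the paper's proof: two nested applications of an injective, continuously invertible sum-pooling encoder (the paper's Proposition~\ref{prop:multivariate_continuous_bijection}, which handles the variable-cardinality issue you flag by bounding multiset sizes via $N_v$ and $N_\sigma$ and shifting the encoder by $\phi'(x_\circ)$), with compactness plus the inverse function theorem giving continuity of the inverses and a continuous extension to the ambient space at the end. The only cosmetic difference is that the paper builds $\rho$ in two explicit stages through the intermediate function $\mathcal{F}_1(x,Y)=\mathcal{F}(\alpha_1(x),\{\alpha_1(y)\}_{y\in Y})$ rather than arguing directly that the combined encoding $(q_A,r_B)$ is a homeomorphism onto its image, but the content is the same.
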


Recall that for the $\ell$-th layer, the input feature for any net $\anet'$ is  $\mu_{\anet} = M^{\ell-1}(\anet')$.
Now compare the right-hand side of Eqn~(\ref{eq:sumdecomposition}) with Eqns~(\ref{eq:node-update-implementation}) and (\ref{eq:net-update-implementation}): it is easy to see that by using $\text{MLP}_1$ from Eqn~(\ref{eq:node-update-implementation}) to approximate the continuous function $\phi_1$, and using $\text{MLP}_2$ and $\text{MLP}_3$ from Eqn~(\ref{eq:net-update-implementation}) to approximate continuous functions $\phi_2$ and $\rho$ respectively, we then have that our iterative updates using Eqns~(\ref{eq:node-update-implementation}) and (\ref{eq:net-update-implementation}) approximate any desired update of the net features via a nested-permutation invariant function as in Eqn~(\ref{eq:M}). In other words, the iterative message-passing update steps in our base-\DEHNN{} provides an universal approximation of the desired nested-permutation invariant update functions over nets. The proof of this theorem and more discussions can be found in the Supplement.

\subsection{Augmenting base-\DEHNN{} to \DEHNN}
\label{subsec:augmentation}

The base-\DEHNN{} provides an effective way to update features for both nodes and hyperedges in an iterative manner. We now describe further augmentation strategies for the resulting \DEHNN{} to capture long-range interactions as well as to be more senstive to the multi-scale graph topology. 

\paragraph{Hierarchy of virtual nodes.} 
The standard message-passing GNNs have difficulty to capture long-range interaction due to issues such as over-smoothing, over-squashing and under-reaching. 
Most GNNs used in practice have few layers. 
Transformer-type models for graphs may alleviate the issue \citep{NEURIPS2022_8c3c6668}, but each self-attention layer requires quadratic computation, which does not scale to large graphs. 
Furthermore, it is not clear how to effectively encode graph structures for transformers, even with the use of initial position encoding \citep{mueller2023attending} or reweighting based on graph distances \citep{zhang2023rethinking}. %
Intuitively, we want to keep the simple and efficient message-passing framework over the input (very sparse) hypergraph, but also have a way to propagate long-range information among nodes that are far away from each other (in terms of graph distances). We will do so via the use of \emph{virtual nodes}. 

Specifically, a \emph{virtual node (VN)} is an additional node we add that is connected to all input nodes. This effectively reduces the graph diameter to $2$, and has been a popular strategy in graph learning literature; e.g. \citep{10.5555/3305381.3305512}. However, note that messages cannot be directly passed between two nodes. Instead, information of all nodes have to be aggregated at the virtual node before being passed back to all nodes. Nevertheless, \citep{cai2023connection} shows that a message-passing GNN augmented by a single VN can already approximate Lineaer Transformer \citep{10.5555/3524938.3525416} and Performer \citep{choromanski2020rethinking} (as well as general transformers to some extent) and bring significant empirical gains.

\begin{figure}%
\centering
\includegraphics[width=0.30\textwidth]{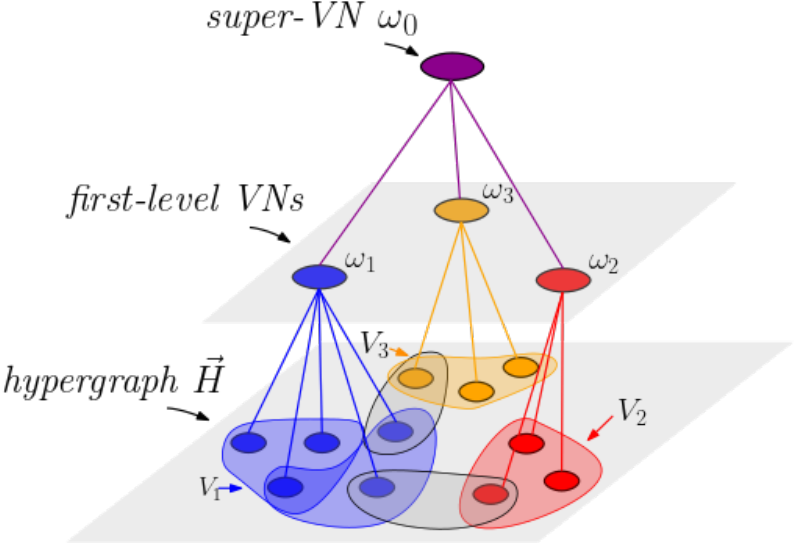}
\caption{A two-level hierarchy of virtual nodes (VNs).}
\label{fig:hvn}
\end{figure}

While adding a single VN will only linearly increase the number of edges, the VN itself will have a high degree and potentially become a computatoinal bottleneck. Furthermore, since the features of all nodes have to be aggregated at the virtual node, the aggregated messages will lose sensitivity to individual node features. The benefits of adding a single VN thus diminishes as the graph becomes larger. 
We instead propose to use a {\textbf{hierarchy of VNs}} as follows (see Figure~\ref{fig:hvn}). 

\begin{compactitem}
\item Given an input hypergraph $\adH = (V, \hE)$, we first use Metis \citep{doi:10.1137/S1064827595287997} to partition its node set to $k$ disjoint subsets $V = V_1 \cup V_2 \cup \cdots \cup V_k$. In our experiments, we keep the sizes of $V_i$s roughly balanced, and therefore the value of $k$ varies as input graph size changes. 

\item We introduce a VN $\avn_i$ for each subset $V_i$, for $i\in [1, k]$, and $\avn_i$ is connected to all nodes in $V_i$. These VNs are \emph{first-level VNs}. 
Note that the total number of edges added this way is only $n = |V|$. 

\item We further create a \emph{super-VN}, denoted by $\avn_0$, which connects to all the first-level VNs. This introduce an additional $k$ number of edges. 
\end{compactitem}

In what follows, we refer to nodes from input hypergraphs as \emph{standard nodes}. During the updating of node/edge features, we will use \emph{heterogeneous updates}, where the aggregation functions at the first-level VNs and super-VN are different from that at standard nodes. 
Different from graph pooling, the use of additional VNs keep the original hypergraph topology, while they act as additional ``bridges'' and allow information flow both at global (among far-apart nodes via VNs) and local scales (along original edges).

As the input hypergraph becomes even larger, we could use multiple layers in our hierarchy of VNs. Nevertheless, we observe that two layers already yield good performance in our current test cases. 
If nodes are placed, one could partition the node set by spatial locations (e.g., by decomposing the rectangular region) instead of using Metis.  

\paragraph{Initial positional / structural encodings.}
Finally, we aim to encode meaningful multi-scale features for each node / hyperedge. 
To this end, for each node, we add the Laplacian positional encoding consisting of the function value of this node from the first $s$ ($s=10$ in our experiments) eigenvectors of the undirected version of input hypergraph. %

Furthermore, to better capture the ``shape'' of the neighborhood of each node $v$ in a more discriminative and multi-scale manner, similar to \citep{yan2021link,zhao2020persistence}, we use the so-called extended persistence diagram (PD) summary induced by the shortest path distance function within the $r$-hop neighborhood of each node ($r=6$ in our experiments) as an initial \emph{structural encoding} for each node. Note that it is known \citep{TW19} that PDs constructed this way can encode rich graph information of the $r$-hop neighborhood of each node $v$ (viewed as a local motif around $v$), such as clustering coefficients, number of nodes at distance $a\le r$ to $v$, number of independent cycles, etc.
See the Supplement for more details. 
Indeed, as our ablation study and results in Supplement show, adding PDs improve \emph{both our and previous graph learning models}.

\section{Experiments}
\label{sec:exp}

\subsection{Datasets}

We used 12 of the Superblue circuits from \citep{ispd2011benchmarks,dac2012benchmarks} to evaluate our proposed models and baselines. The Superblue circuits are some of the most complex yet publicly available circuits used in previous work about VLSI placement and routing. The size of these netlists range from 400K to 1.3M nodes, with similar number of nets. Note that these hypergraphs are very sparse, although there are a very small fraction of nets with large sizes. See the Supplement for more detailed statistics of these netlists, including the size of each design as well as the statistics of target properties.

We pulled our designs from the RosettaStone \citep{9785815} repository's Skywater 130 nm technology \citep{edwards2020skywater} benchmark set.  We then used a commercial physical design tool to perform placement optimization for each design with a utilization ratio of 0.7.  We exported global-routing congestion information in the form of demand and capacity for each global-route cell (GRC) in each routing layer.

\subsection{Setup}

\paragraph{Baselines.} 
We compare with a range of SOTA baselines: 
\textbf{(i)} Graph Convolutional Networks (GCN) \citep{kipf2017semisupervised}; \textbf{(ii)} a SOTA variant of Graph Attention Networks (GATv2) \citep{DBLP:journals/corr/abs-2105-14491}; 
\textbf{(iii)} Hypergraphs Message Passing Neural Network (HMPNN) \citep{Heydari_2022}; \textbf{(iv)} Hypergraph (Neural) Networks with Hyperedge Neurons (HNHN) \citep{dong2020hnhn}; \textbf{(v)} A multiset function framework for Hypergraph Neural Network (AllSet) \citep{chien2022you}; \textbf{(v)} the SOTA graph-based model specifically defined for netlist property predictions, NetlistGNN \citep{NEURIPS2022_7fa54815}. See the Supplement for more details on their architecture, including model complexity.
We also compare with the hypergraph convolutional operator (HyperConv) \citep{DBLP:journals/corr/abs-1901-08150} and Linear Transformers \citep{10.5555/3524938.3525416}. As both models' performances on average are not as good as other baseline models, we include their results only in the Supplement.

We compare these baselines with our \DEHNN{} model, which we refer to as {\sf full-\DEHNN{}} in results to differentiate with base-\DEHNN{} model, whose performance we also include for reference. Note that base-\DEHNN{} is {\bf without} persistence diagrams (PDs) as initial features {\bf nor} virtual nodes (VNs).
We implement our \DEHNN{} and the baselines with PyTorch \citep{NEURIPS2019_bdbca288} and PyTorch-geometric \citep{fey2019fast}. 

\myparagraph{Features.}
For each cell, its initial features include: type, width, height of gate, degree, degree distribution of a local neighborhood (summarized into a vector), top-10 Laplacian eigenvectors, and persistence diagram (PD) features (see Supplement). 
For all methods other than Linear Transformer, the initial net features are the nets' degrees. For Linear Transformer, net features are obtained by averaging the features of those nodes contained in it so as to provide more topology information. 
(see Supplement). 
For a fair comparison, the same initial cell/net features are used for all models (other than our base-\DEHNN{}, which do not have PDs as it is a base model without any augmentation). One might wonder whether adding persistence diagrams (PDs) features are beneficial for both our method and baselines. Indeed as we show later in Ablation study and in Supplement, using PDs improve model performance for {\bf both our methods and baselines}. For example, it improves NetlistGNN by $3.6\%$ on average in terms of demand regression in the single-design setting. 

\myparagraph{Prediction tasks.}
We test three different tasks in our experiments, including both net- and node-based tasks. These tasks cover the types of experiments performed by previous netlist prediction approaches.  

\begin{compactitem}
\item \textbf{Net-based wirelength regression}: We use half-perimeter wirelength (HPWL), a common estimator used for wirelength calculation, to calculate the wirelength of each net \citep{Mirhoseini2021AGP}. Similar to \citep{NEURIPS2022_7fa54815}, we take the $\log_{2}$ of the wirelength to reduce the range. 
\item \textbf{Net-based demand regression}: We predict the net-based {\it demand}. Congestion happens when demands exceeds capacity. There is no concensus on how to define congestion (difference or ratio) and we thus directly predict demand. 
    \item \textbf{Cell-based congestion classification}: Similar to \citep{NEURIPS2022_7fa54815} and \citep{10.1145/3489517.3530675}, we classify the cell-based congestion values (computed as the ratio of demand/capacity) into (a) [0, 0.9], \emph{not-congested}; and (b) [0.9, $\inf$]; \emph{congested}. 
\end{compactitem}

Our experiments have two settings: 
\begin{compactitem}
    \item \textbf{Single-design:} We train and evaluate on each individual design separately. For each graph in a design, we apply 4-fold cross-validation to randomly split the nodes into 4 subsamples with same sizes (25\%/25\%/25\%/25\%). We report the average performance across all 12 designs with cross-validation applied to each design. The distribution of target values and the average performance from cross-validation for each design can be found in the Supplement. 
    \item \textbf{Cross-design:} We aim to evaluate the ability of the models to generalize to unseen netlist topologies. Following previous work \citep{NEURIPS2022_7fa54815}, we use 10 designs, Superblue1,2,3,5,6,7,9,11,14,16, for training, superblue18 for validation, and superblue19 for testing. 
\end{compactitem}

\subsection{Results}

 \begin{table}
\caption{{\small Net-based wirelength regression. Last row ``Improvement'' refers to the improvement of our full \DEHNN{} model over the best baseline approach for each metric.}}
\centering
\resizebox{1.0\columnwidth}{!}{
\begin{tabular}{c|ccc|ccc}
\toprule
\textbf{Model} & \multicolumn{3}{c|}{Single-Design} & \multicolumn{3}{c}{Cross-Design} \\
\cmidrule{2-7}
& \textbf{RMSE} $\downarrow$ & \textbf{MAE} $\downarrow$ & \textbf{Pearson} $\uparrow$ & \textbf{RMSE} $\downarrow$ & \textbf{MAE} $\downarrow$ & \textbf{Pearson} $\uparrow$ \\ 

\midrule
\large{GCN} & \Large{$1.762$} & \Large{$1.276$} & \Large{$0.750$} & \Large{\color{blue}$1.691$} & \Large{\color{blue}$1.276$} & \Large{\color{blue}$0.746$} \\ 
\large{GATv2} & \Large{$1.812$} & \Large{$1.330$} & \Large{$0.687$} & \Large{$1.717$} & \Large{$1.281$} & \Large{$0.737$} \\
\large{AllSet} & \Large{\color{blue}$1.718$} & \Large{\color{blue}$1.264$} & \Large{\color{blue}$0.760$} & \Large{$1.837$} & \Large{$1.348$} & \Large{$0.695$} \\
\large{HMPNN} & \Large{$1.841$} & \Large{$1.368$} & \Large{$0.710$} & \Large{$1.785$} & \Large{$1.335$} & \Large{$0.710$} \\
\large{HNHN} & \Large{$1.852$} & \Large{$1.368$} & \Large{$0.717$} & \Large{$1.754$} & \Large{$1.333$} & \Large{$0.701$} \\
\large{NetlistGNN} & \Large{$1.773$} & \Large{$1.320$} & \Large{$0.740$} & \Large{$1.762$} & \Large{$1.324$} & \Large{$0.718$} \\
\midrule
\large{base \DEHNN} & \Large{$1.751$} & \Large{$1.269$} & \Large{$0.748$} & \Large{$1.731$} & \Large{$1.291$} & \Large{$0.730$} \\
\large{\textbf{full \DEHNN}} & \Large{\color{red}$\bm{1.689}$} & \Large{\color{red}$\bm{1.245}$} & \Large{\color{red}$\bm{0.770}$} & \Large{\color{red}$\bm{1.677}$} & \Large{\color{red}$\bm{1.242}$} & \Large{\color{red}$\bm{0.754}$} \\
\midrule
\large{\textbf{Improvement}} & \Large{$\bm{1.7\%}$} & \Large{$\bm{1.6\%}$} & \Large{$\bm{1.3\%}$} & \Large{$\bm{1.9\%}$} & \Large{$\bm{2.6\%}$} & \Large{$\bm{1.8\%}$}\\
\bottomrule

\end{tabular}
}
\vspace{-0.5cm}
\label{table:hpwl}
\end{table}

 \begin{table}
\caption{Net-based demand regression for each design. }
\centering
\resizebox{1.0\columnwidth}{!}{
\begin{tabular}{c|ccc|ccc}
\toprule
\textbf{Model} & \multicolumn{3}{c|}{Single-Design} & \multicolumn{3}{c}{Cross-Design} \\
\cmidrule{2-7}
& \textbf{RMSE} $\downarrow$ & \textbf{MAE} $\downarrow$ & \textbf{Pearson} $\uparrow$ & \textbf{RMSE} $\downarrow$ & \textbf{MAE} $\downarrow$ & \textbf{Pearson} $\uparrow$ \\

\midrule
\large{GCN} & \Large{$9.321$} & \Large{$6.163$} & \Large{$0.570$} & \Large{$6.571$} & \Large{$5.024$} & \Large{$0.365$} \\
\large{GATv2} & \Large{$9.342$} & \Large{$6.118$} & \Large{$0.561$} & \Large{$6.623$} & \Large{$5.137$} & \Large{$0.363$} \\
\large{AllSet} & \Large{$9.072$} & \Large{\color{blue}$5.745$} & \Large{\color{blue}$0.632$} & \Large{\color{blue}$6.120$} & \Large{\color{blue}$4.820$} & \Large{$0.345$} \\
\large{HMPNN} & \Large{$9.342$} & \Large{$6.118$} & \Large{$0.561$} & \Large{$6.979$} & \Large{$5.356$} & \Large{$0.306$} \\
\large{HNHN} & \Large{$9.119$} & \Large{$5.885$} & \Large{$0.594$} & \Large{$6.390$} & \Large{$4.870$} & \Large{$0.358$} \\
\large{NetlistGNN} & \Large{\color{blue}$9.063$} & \Large{$5.839$} & \Large{$0.623$} & \Large{$8.328$} & \Large{$6.839$} & \Large{\color{blue}$0.367$} \\
\midrule
\large{base \textbf{\DEHNN}} & \Large{$8.997$} & \Large{$5.764$} & \Large{$0.630$} & \Large{$6.778$} & \Large{$5.085$} & \Large{$0.337$} \\
\large{\textbf{full \DEHNN}} & \Large{\color{red}$\bm{8.381}$} & \Large{\color{red}$\bm{5.334}$} & \Large{\color{red}$\bm{0.683}$} & \Large{\color{red}$\bm{6.037}$} & \Large{\color{red}$\bm{4.670}$} & \Large{\color{red}$\bm{0.372}$} \\
\midrule
\large{\textbf{Improvement}} & \Large{$\bm{7.5\%}$} & \Large{$\bm{7.2\%}$} & \Large{$\bm{8.1\%}$} & \Large{$\bm{1.4\%}$} & \Large{$\bm{4.1\%}$} & \Large{$\bm{1.4\%}$} \\
\bottomrule

\end{tabular}
}
\vspace{-0.5cm}
\label{table:demand}
\end{table}
 
\begin{table}
\caption{Cell-based congestion classification.}
\centering
\resizebox{1.0\columnwidth}{!}{
\begin{tabular}{c|ccc|ccc}
\toprule
\textbf{Model} & \multicolumn{3}{c|}{Single-Design} & \multicolumn{3}{c}{Cross-Design} \\
\cmidrule{2-7}
& \textbf{Precision} $\uparrow$ & \textbf{Recall} $\uparrow$ & \textbf{F\_score} $\uparrow$ & \textbf{Precision} $\uparrow$ & \textbf{Recall} $\uparrow$ & \textbf{F\_score} $\uparrow$  \\ 

\midrule
\large{GCN} & \Large{$0.761$} & \Large{$0.857$} & \Large{$0.802$} & \Large{$0.633$} & \Large{$0.997$} & \Large{\color{blue}$0.773$}\\
\large{GATv2} & \Large{$0.810$} & \Large{$0.864$} & \Large{\color{blue}$0.835$}  & \Large{$0.630$} & \Large{$\color{red}\bm{0.999}$} & \Large{$0.765$}\\
\large{AllSet} & \Large{$0.782$} & \Large{$0.837$} & \Large{$0.804$} & \Large{$0.645$} & \Large{$0.964$} & \Large{\color{blue}$0.773$} \\
\large{HMPNN} & \Large{$0.774$} & \Large{$0.826$} & \Large{$0.792$} & \Large{$0.633$} & \Large{$\color{red}\bm{0.999}$} & \Large{$0.772$} \\
\large{HNHN} & \Large{$0.792$} & \Large{\color{blue}$0.869$} & \Large{$0.826$} & \Large{\color{blue}$0.648$} & \Large{$0.939$} & \Large{$0.767$} \\
\large{NetlistGNN} & \Large{\color{blue}$0.812$} & \Large{$0.860$} & \Large{$0.831$} & \Large{$0.647$} & \Large{$0.953$} & \Large{$0.771$} \\
\midrule
\large{base \textbf{\DEHNN}} & \Large{$0.824$} & \Large{$0.860$} & \Large{$0.840$} & \Large{$0.653$} & \Large{$0.990$} & \Large{$0.774$} \\
\large{\textbf{full \DEHNN}}
& \Large{\color{red}$\bm{0.833}$} & \Large{\color{red}$\bm{0.876}$} & \Large{\color{red}$\bm{0.853}$} & \Large{\color{red}$\bm{0.660}$} & \Large{$0.986$} & \Large{\color{red}$\bm{0.780}$} \\
\midrule
\large{\textbf{Improvement}} & \Large{$\bm{2.6\%}$} & \Large{$\bm{0.8\%}$} & \Large{$\bm{2.2\%}$} & \Large{$\bm{1.7\%}$} & \Large{$\bm{-}$}  & \Large{$\bm{1.0\%}$}\\
\bottomrule

\end{tabular}
}
\vspace{-0.5cm}
\label{table:classify}
\end{table}

The test performance of all baselines and our methods (base-\DEHNN{} and full-\DEHNN) are shown in Tables~\ref{table:hpwl}, \ref{table:demand} and \ref{table:classify}. For the regression tasks, to be comprehensive, we use three metrics: the Mean Squared Error (MSE), the Mean Average Error (MAE), and the Pearson correlation (Pearson). For classification tasks, we report the Precision, Recall, and F-Score. Note that for MSE and MAE, the smaller the value is the better; while for Pearson, Precision, Recall, and F-Score, the larger the better.  
In all tables, we highlight the best performance results in {\bf red}, while the best among all baseline models are colored {\bf blue} (unless a baseline result is the best, in which case it will be red-colored). 
In the last row of these tables, we show the {\bf Improvement} of our full-\DEHNN{} model over the {\bf best of all baselines} for each metric; note that our improvement over any individual baseline can only be better than this improvement. 
Our full-\DEHNN{} model outperforms {\bf all} baselines, sometimes significantly. %
For example, compared to NetlistGNN \citep{NEURIPS2022_7fa54815}, the previous SOTA for netlist representatin learning, our improvement on average is around $5.3\%$ for wirelength prediction, and $8.6\%$ for demand prediction, both in terms of MAE. 
See the Supplement for the full results, including the test performances for each design in the single-design setting.     

\myparagraph{Ablation study.}
We carried out an ablation study to understand the effects of the different strategies employed in our \DEHNN{}.
In particular, the factors we wish to test are: (a) the use of direction in modeling nets, (b) the use of persistence diagrams (PDs) as features, and (c) the use of single and hierarchical VNs (virtual nodes). To this end, we compare the performance of the following versions: 
(a) {\sf base-E-HNN} stands for treating a net as a standard hyperedge (thus {\bf no direction}), and using {\bf neither} PD features {\bf nor} VNs. 
(b) {\sf base-\DEHNN{}} is the base model for directed hypergraph (described in Section~\ref{subsec:baseDEHNN}) with {\bf neither} PDs {\bf nor} VNs. In other words, the difference between base-\DEHNN{} and {\sf base-E-HNN} shows the effect of adding directions. 
(c) {\sf base-\DEHNN{}+PD} is the base model with only PDs. Hence the difference between (c) and (b) is to show the effect of PDs. 
(d) {\sf base-\DEHNN+PD+single VN} is the base model with PD and a single global VN. 
(e) Finally, {\sf full-\DEHNN{}} is our full model with PDs and a two-level hierarchy of VNs. 
The results for net-based demand regression and cell-based congestion classification are shown in Figure~\ref{fig:ablation}.
Full results are found in the Supplements, Other metrics and tasks show a similar behavior.
For example, for demand prediction, from {\sf base-E-HNN}, to adding directions, PDs, single VN, and finally two-level VNs, performance improves over the previous version by $2.5\%$, $2.6\%$, $1.0\%$ and $3.4\%$, respectively. Overall, full-\DEHNN{} improves over the {\sf base-DE-HNN} by around $6.8\%$, while its improvement over {\sf base-E-HNN} (base model with no direction) is $9.2\%$.  
\begin{figure}[htbp]
    \centering
    \includegraphics[width=0.48\textwidth]{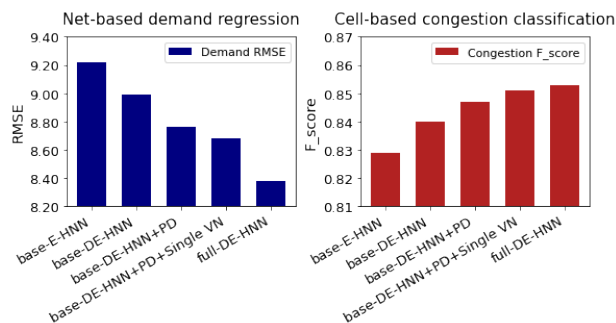}
    \caption{{\small Ablation study for net-based demand regression (left, RMSE) and cell-based congestion classification (right, F-score).}}
    \label{fig:ablation}
\end{figure}

\section{Conclusion} 
In this paper, we presented an effective model for representation learning on directed hypergraphs.
We considered learning on netlists, the hypergraph representation of circuits in chip design. This has great importance in practice but so far ML approaches for netlist representation learning has been limited, partly due to their huge sizes, long-range interactions, as well as the scarsity of benchmark datasets. 
We introduced several strategies to capture long-range interactions, graph motifs, and to consider direction in a hyperedge. Our model significantly outperforms a range of SOTA methods over a collection of chip designs.
Our datasets will be publicly available, which we hope will facilitate further research on ML for chip design applications, pushing the ability of methods to capture long-range interactions.
Finally, while we significantly outperformed other approaches, we note that in general, improvements in the cross-design setting are less prominent, potentially due to large variations in circuit designs. It will be interesting to further explore this direction.

\subsubsection*{Acknowledgements}
This work is partially supported by NSF under grants CCF-2112665 and  CCF-2310411, as well as by a gift fund from QualComm. The first, second and last authors would like to extend our sincere thanks to Prof. Andrew B. Kahng for the many insightful discussions and contributions in the early stage of this work, especially in exploring and experimenting the use of topological summaries for property prediction of netlists.

\bibliography{paper}
\bibliographystyle{apalike}

\appendix
\onecolumn
\aistatstitle{Supplement of the AISTATS submission \#1797, \\Title: ``DE-HNN: An effective neural model for Circuit Netlist representation"}
\section{More Background}
\begin{figure}[h]
    \centering
    \includegraphics[scale=0.7]{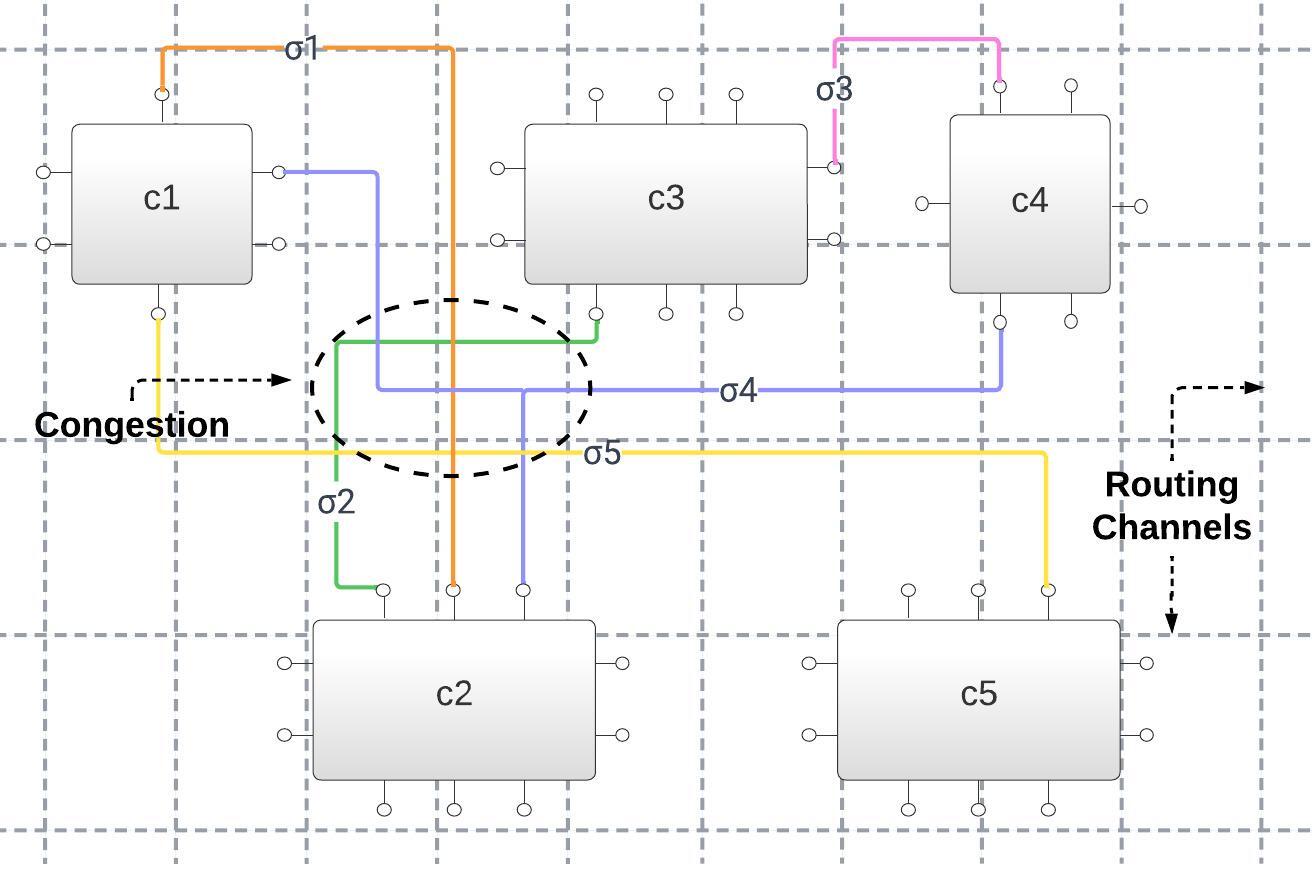}
    \caption{Visulization of a circuit netlits in the {\bf post place-and-route} stage: Each cell ($c_i$) is positioned within the physical layout of the chip and interconnected with other components following the net maps ($\sigma_i$).}
    \label{fig:circuit_diagram}
\end{figure}
\subsection{Circuit netlists}
A circuit netlist is a textual representation of electronic components, such as logical boolean gates, and the connections between them. Figure \ref{fig:circuit_diagram} provides an example illustrating a circuit netlist consisting of five components interconnected by five nets. After laying out the netlist in the physical space (placement), during the routing stage, the edges of the netlist are mapped to the routing channels within the chip's physical floorplan (indicated by dashed lines in Figure \ref{fig:circuit_diagram}). Routing congestion occurs when the number of edges to be routed in a specific region of the floorplan exceeds the available routing capacity.
\label{appendix:background}
\\ \\ \\ \\ \\ \\
\subsection{Persistence homology based features}
\label{appendix:PD}
Persistent homology is one of the most important development in the field of topological data analysis in the past two decades. It can encode meaningful topological features in a multi-scale manner and has already been applied to numerous applications; see books \citep{EH10, DW22}. 
Intuitively, given a domain $X$, a $p$-th homology class (or informally, a $p$-th homological feature) essentially captures a family of equivalent $p$-dimensional ``holes'' in $X$; for example, 0-, 1- and 2-dimensional homology captures connected components, equivalent classes of loops and of 2-dimensional voids, respectively. The $p$-th homology group $\mathrm{H}_p(X)$ (using $\mathbb{Z}_2$ coefficients) characterizes the space of $p$-th homological features of $X$, and its rank $rank(\mathrm{H}_p(X))$ corresponds to the number of independent $p$-th homological features. For example, $rank(\mathrm{H}_0(X))$ denotes the number of connected components of space $X$. If the domain $X$ is a graph, then its $1$-st homology group is homeomorphic the space of cycles (loops), and $rank(\mathrm{H}_1(X))$ is simply the number of independent cycles. 

Persistent homology is a modern extension of homology, where instead of a single space $X$, we now inspect a sequence of growing spaces $X_1 \subseteq X_2 \subseteq \cdots X_m = X$, called a \emph{filtration} \footnote{Note that persistent homology has been extended for much broader families of filtrations then the growing sequence we describe here; see \citep{EH10, DW22}.}. Intuitively, we can view this filtration as an evolution of the space $X$. As the space grows, we track its corresponding topological features (as captured by the homology groups introduced above). Sometimes new topological features (e.g., a new component, or a hole) will appear, and sometimes they will disappear (e.g, a hole is filled and thus ``disappear''). The \emph{persistent homology (PH)} captures such creation and death of topological features, and outputs a feature summary called \emph{persistent diagram (PD)}, consisting of the birth-time and death-time of classes of topological features during this evolution. In short, the PDs provide a multiscale summary for the topological features of $X$ from the perspective of filtration $X_1\subseteq X_2\subseteq \cdots \subseteq X_m$. In the past few years, there have been a series of approaches to vectorize PDs or to kernelize them for ML applications; see Chapter 13 of \citep{DW22}. In our work, we use the so-called \emph{persistence images} \citep{JMLR:v18:16-337} to convert a PD to a finite dimensional vector to be used as part of the node feature. 

\textbf{Persistence diagram (PD) induced from Netlists.}
We compute the persistence diagram (PD) based on the following directed graph representation of the netlist (instead of using the directed hypergraph representation, for easier computation of persistence). 

In particular, given a netlist, we create a directed graph $G = (V, E)$ where each node in $V$ corresponds to a cell in the netlist, while a directed edge is formed between the driver cell of some net with each of the sink in that net; that is, each net $\sigma = (\dr_\sigma, \sinkset_\sigma)$ gives rise to a set of directed edges $\{(\dr_\sigma, v)\}_{v\in \sinkset_\sigma}$. 
We refer to this graph as the \emph{star-graph} induced by the input netlist. 

Now, for each node $v$ in the star-graph $G$, we create its $k$-ring \emph{out-flow neighborhood $G^{in}_v$} which is simply the subgraph in $G$ spanned by all nodes reachable from $v$ by a path of at most $k$ hops. 
Symmetrically, the $k$-ring \emph{in-flow neighborhood $G^{out}_v$} of $v$ is the subgraph of $G$ spanned by all nodes such that $v$ is reachable within $k$ hops. These two $G_v$s form the \emph{local motifs} around $v$.
We wish to obtain a feature vector summary for $G^{in}_v$ and $G^{out}_v$. 
To this end, similar to \citep{yan2021link,zhao2020persistence}, we use the so-called \emph{$0$th extended persistence diagram} $PD^{*}_v$ \citep{cohen2009extending} of $G^*_v$ (where $* \in \{in, out\}$) induced by the shortest path distance function to $v$ as its summary. 
For our specific graph settig, using results of \citep{TW19}, one can show that $PD^*_v$ computed this way, is a concise summary that encodes rich information around $v$, including the number of triangles incident to $v$, clustering coefficient of $v$, number of nodes at distance $r \le k$ away from $v$, number of crossing edges from distance-$r$ nodes to distance ($r+1$) nodes, certain ``shortest" system of cycle basis passing through $v$ of $G^*_v$, and so on. 
Finally, each PD (for in-flow neighborhood and out-flow neighborhood of $v$) is vectorized to persistent images as in \citep{JMLR:v18:16-337}, and we further vectorize (flatten) each image matrix all together to generate a new vector as the persistent representation vector for each node. 

\subsection{Metis Partitioning}\label{appendix:metis}
Metis \citep{doi:10.1137/S1064827595287997} is currently State-of-the-art (SOTA) algorithm in chip design applications to provide balanced k-way partitions for large graphs efficiently. We choose Metis due to its practical performance, that it produces multiple clusters of balanced sizes, and the ease it is to control the size of the clusters. Besides Metis, hMetis \citep{10.1109/92.748202} developed on the base of Metis for hypergraph parititioning is an alternative choice. In our paper we used Metis in case we need partitioning for other graph based methods we compare our method with, and Metis will give us consistent partitioning. Nevertheless, we expect that hMetis can be used without much impact to the performance of our pipeline. 

\section{Theoretical Analysis} \label{sec:theoretical_analysis}
\subsection{Preliminaries}\label{sec:prelim}
Let $\mathbb{D}$ be a domain, such as $\mathbb{Q}, \mathbb{R}$, and $\mathbb{R}^d$. Consider the set function $f : 2^{\mathbb{D}} \rightarrow \mathrm{codom}(f)$ where $\mathbb{D}$ is a countable domain. Then, we have 
\begin{align}\label{eqn:sumdecomp}
    \forall X \subseteq \mathbb{D}: f(X) = \rho \circ \Phi(X), \ \ \Phi(X) = \sum_{x \in X} \phi(x),
\end{align}
where $\phi:\mathbb{D} \rightarrow \mathrm{codom}(\phi) \subset \mathbb{R}$, and $\rho: \mathrm{codom}(\phi) \rightarrow \mathrm{codom}(f)$. This is the so-called sum-decomposable representation of $f$ via $\mathbb{R}$ in terms of $(\phi, \rho)$ basis functions. We refer to ambient space of $\mathrm{codom}(\phi)$ (in Eqn~(\ref{eqn:sumdecomp}), it is $\mathbb{R}$) as the model’s latent space~\citep{NIPS2017_f22e4747}. There is an extended sum-decomposable representation of $f$ on multisets whose elements are drawn from an uncountable domain (e.g., $\mathbb{R}^d$) by restricting the size of input multisets. 
Recently, \citet{tabaghi2023universal} proposed an encoder $\phi: \mathbb{R}^d \rightarrow \mathrm{codom}(\phi) \subset \mathbb{R}^{2dN}$ such that for any continuous multiset function $f:\mathbb{X}_{N, \mathcal{B}} \rightarrow \mathrm{codom}(f)$ we have,
\[
    \forall X \in \mathbb{X}_{N,\mathcal{B}} : f(X) = \rho \circ \Phi(X),
\]
where $\mathbb{X}_{N,\mathcal{B}}$ is the set of multisets of size $N$ with elements drawn from $\mathcal{B}$ --- a compact subset of $\mathbb{R}^d$ --- and $\Phi(X) = \sum_{x \in X} \phi(x)$. The latent space dimension of this representation is $2dN$. As a result of this representation, $\Phi:\mathbb{X}_{N, \mathcal{B}} \rightarrow \mathrm{codom}(\Phi) $ is an injective map where $\mathrm{codom}(\Phi) \subset \mathbb{R}^{2dN}$. Furthermore, \citet{tabaghi2023universal} show that $\rho$ is continuous over the latent space $\mathbb{R}^{2dN}$.

At their core, sum-decomposable models rely on a bijection between multisets and encoded features, that is, $X = \alpha \circ \Phi(X)$ for any multiset $X \in \mathbb{X}_{N,\mathcal{B}}$ and a bijective map $\alpha$. In the following proposition, we first generalize this result to multisets with varied sizes.
\begin{proposition} \label{prop:multivariate_continuous_bijection}
   Let $\mathcal{B}$ be a connected compact subset of $\mathbb{R}^d$. There exists a function $\phi: \mathbb{R}^d \rightarrow \mathbb{R}^{2dN}$ such that 
    \[
        \forall X \in \mathbb{X}_{\leq N, \mathcal{B}}: X = \alpha \bigg( \sum_{x \in X} \phi(x) \bigg) = \alpha \circ \Phi(X)
    \]
    where $\mathbb{X}_{\leq N, \mathcal{B}}$ is all multisets of size $\leq N$ with elements from $\mathcal{B}$, $\alpha$ is a continuous map over $\mathbb{R}^{2dN}$.
\end{proposition}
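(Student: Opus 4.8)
The plan is to reduce this variable-size statement to the fixed-size encoder of \citet{tabaghi2023universal} by padding every multiset up to size exactly $N$ with copies of a single \emph{anchor point} lying outside $\mathcal{B}$, and then to undo this padding by an affine shift of the encoder, so that the padded sum can still be written purely as $\sum_{x\in X}\phi(x)$ with no extra size-dependent term.

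First I would fix an anchor $b^\ast \in \mathbb{R}^d \setminus \mathcal{B}$; since $\mathcal{B}$ is compact, $\delta := \mathrm{dist}(b^\ast,\mathcal{B})>0$. Using the connectedness of $\mathcal{B}$, I would enlarge the domain to $\mathcal{B}' = \mathcal{B} \cup \overline{b_0 b^\ast}$, the union of $\mathcal{B}$ with a straight segment from some $b_0 \in \mathcal{B}$ to $b^\ast$; this $\mathcal{B}'$ is again connected and compact and contains $b^\ast$, so the cited fixed-size result applies to it verbatim. Invoking that result with domain $\mathcal{B}'$ and size $N$ produces $\phi' : \mathbb{R}^d \to \mathbb{R}^{2dN}$ whose sum-pooling $\Phi'(Y) = \sum_{y \in Y}\phi'(y)$ is injective on $\mathbb{X}_{N,\mathcal{B}'}$ with a continuous inverse $\alpha'$ defined on all of $\mathbb{R}^{2dN}$.

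The key construction is the padding map $\pi(X) = X \uplus \{(b^\ast)^{N-|X|}\}$ sending $\mathbb{X}_{\leq N,\mathcal{B}}$ into $\mathbb{X}_{N,\mathcal{B}'}$, together with the \emph{shifted} encoder $\phi(x) := \phi'(x) - \phi'(b^\ast)$, which is the $\phi$ claimed by the proposition (still valued in $\mathbb{R}^{2dN}$). A one-line computation gives the identity $\Phi(X) = \Phi'(\pi(X)) - N\,\phi'(b^\ast)$, because the $N-|X|$ copies of $\phi'(b^\ast)$ contributed by the padded entries combine with the $-|X|\,\phi'(b^\ast)$ coming from the shift into the $X$-independent constant $-N\,\phi'(b^\ast)$. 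Since $b^\ast \notin \mathcal{B}$, the map $\pi$ is injective (the number of anchor copies in $\pi(X)$ equals $N-|X|$, recovering both $|X|$ and $X$), and composing with the injective $\Phi'$ and the fixed affine shift shows that $\Phi$ itself is injective on $\mathbb{X}_{\leq N,\mathcal{B}}$. I would then set $\alpha(y) = \pi^{-1}\!\big(\alpha'(y + N\phi'(b^\ast))\big)$, where $\pi^{-1}$ deletes every copy of $b^\ast$, and verify $\alpha \circ \Phi = \mathrm{id}$.

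The main obstacle I anticipate is continuity of $\alpha$, and specifically of the unpadding step $\pi^{-1}$, since deleting elements equal to $b^\ast$ is a priori discontinuous where a genuine element approaches the anchor. This is exactly where the separation $\delta>0$ is used: on the image $\Phi(\mathbb{X}_{\leq N,\mathcal{B}})$ every genuine element stays at distance at least $\delta$ from $b^\ast$, so in the multiset metric used by the cited encoder the padded and genuine entries remain uniformly separated and the deletion map is continuous there; the affine shift and $\alpha'$ are continuous by construction, so $\alpha$ is continuous on the (compact) image. Finally, because that image is compact, hence closed, in $\mathbb{R}^{2dN}$, a Tietze-type extension yields a continuous map on all of $\mathbb{R}^{2dN}$ agreeing with $\alpha$ there, which is the continuity asserted in the statement. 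A secondary point worth checking is that the fixed-size inverse $\alpha'$ is genuinely available as a continuous map on all of $\mathbb{R}^{2dN}$ rather than merely on $\mathrm{codom}(\Phi')$, which is precisely what the cited result guarantees.
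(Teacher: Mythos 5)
Your construction is, at its core, the same as the paper's: both reduce the variable-size case to the fixed-size encoder of Tabaghi--Wang by padding each multiset to size exactly $N$ with copies of an anchor $x_\circ \in \mathbb{R}^d\setminus\mathcal{B}$ at positive distance from $\mathcal{B}$, both absorb the padding into the encoder via the shift $\phi(x)=\phi'(x)-\phi'(x_\circ)$ so that $\Phi(X)=\Phi'(\pi(X)) - N\phi'(x_\circ)$, and both invert by applying the fixed-size inverse and then deleting the anchors (the paper writes this deletion as intersecting with $\mathcal{B}$). You diverge in two minor, defensible ways. First, you explicitly enlarge $\mathcal{B}$ to a connected compact $\mathcal{B}'$ containing the anchor before invoking the fixed-size result; the paper applies $(\Phi')^{-1}$ to padded multisets with elements in $\mathcal{B}\cup\{x_\circ\}$ without comment, so on this point your version is the more careful one. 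Second, for continuity of $\alpha$ you argue directly that unpadding is continuous on the image because genuine elements stay at distance at least $\delta$ from the anchor (so the anchor multiplicity is locally constant under the matching metric), whereas the paper sidesteps the deletion step entirely: it notes that $\mathbb{X}_{\leq N,\mathcal{B}}=\bigcup_{n\leq N}\mathbb{X}_{n,\mathcal{B}}$ is compact as a finite union of compact sets, that $\mathrm{codom}(\Phi)$ is Hausdorff, and that a continuous bijection from a compact space to a Hausdorff space has a continuous inverse. The paper's continuity argument is shorter; yours is more hands-on but equally valid. Both finish with the same continuous-extension step from $\mathrm{codom}(\Phi)$ to all of $\mathbb{R}^{2dN}$. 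I see no gap.
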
 

\begin{proof}
As mentioned earlier, \citet{tabaghi2023universal} proposed a  continuous encoding function $\phi^{\prime}: \mathbb{R}^d \rightarrow \mathbb{R}^{ 2dN}$ such that $\Phi^{\prime}(x) = \sum_{x \in X} \phi^{\prime}(x)$ is an injective map over multisets with exactly $N$ elements, that is, $(\Phi^{\prime})^{-1} \circ \Phi^{\prime}(X) = X$ where elements of multiset $X$ is drawn from $\mathbb{R}^d$ and $|X| = N$ \footnote{Note that we trivially changed the notation from $\phi$ to $\phi^{\prime}$.}. 

\begin{lemma} \label{lem:phi_inv_cont}
    The function $\Phi^{\prime}$ is a homeomorphism.
\end{lemma}
\begin{proof}
 The function $\Phi^{\prime}$ is continous and injective by contruction. We want to show that $(\Phi^{\prime})^{-1}$ is continuous over $\mathrm{codom}(\Phi^{\prime}) = \{ \Phi^{\prime}(X): X \in \mathbb{X}_{N, \mathcal{B}}\}$ where $\mathcal{B}$ is a compact and connected subset of $\mathbb{R}^d$. The set $\mathbb{X}_{N, \mathcal{B}}$ is compact; refer to Lemma 5 in \citep{tabaghi2023universal}. Also, $\mathrm{codom}(\Phi^{\prime}) \subseteq \mathbb{R}^{2dN}$ forms a metric space with $\ell_2$ metric; Hence, it is also a Hausdorff space. By the inverse function theorem, $\Phi^{\prime}$ --- a continuous bijection from a compact space to a Hausdorff space --- has a continuous inverse; see Proposition 13.26 in \citep{sutherland2009introduction}.
\end{proof}

To extend the result of \Cref{lem:phi_inv_cont} to multisets of variable sizes, we follow the same proof sketch as the one used for the one-dimensional case \citep{wagstaff2019limitations}. In particular, let $x_{\circ} \in \mathbb{R}^{d} \setminus \mathcal{B}$ where $\inf_{x \in \mathcal{B}}\| x - x_\circ \|_2 > 0$. Then, we define $\phi(x) = \phi^{\prime}(x) - \phi^{\prime}(x_{\circ})$. For any multiset with $M \leq N$ elements from $\mathcal{B}$, say $X$, we have
    \begin{align*}
        \forall X \in \mathbb{X}_{\leq N, \mathcal{B}}: \Phi(X) &= \sum_{x \in X} \phi(x) = \sum_{x \in X}  \phi^{\prime}(x) - M \phi^{\prime}(x_{\circ} ) \\
        &= \Phi^{\prime}( X \cup \{ \{ \underbrace{x_{\circ} , \ldots, x_{\circ} }_{N-M} \} \} )+\mathrm{const},
    \end{align*}
where $\mathrm{const} = -N \phi^{\prime}(x_\circ) $. 
This is in the sum-decomposable form. Since $\Phi^{\prime}$ is an injective map over $\mathbb{X}_{N, \mathcal{B}}$, $\Phi$ is also an injective map over $\mathbb{X}_{\leq N, \mathcal{B}}$. Specifically, we can derive a closed-form expression for its inverse as follows:
    \begin{align*}
         \Big( {\Phi^{\prime}}^{-1} \circ (\Phi (X)  - \mathrm{const} \big) \Big) \cap \mathcal{B}  &= \Big( {\Phi^{\prime}}^{-1}  \circ  \Phi^{\prime} ( X \cup \{ \{ \underbrace{x_{\circ} , \ldots, x_{\circ} }_{N-M} \} \} )  \Big) \cap \mathcal{B}   \\
         &= ( X \cup \{ \{ \underbrace{x_{\circ} , \ldots, x_{\circ} }_{N-M} \} \} )  \cap \mathcal{B} \\
         &= X. 
    \end{align*}
In other words, we have $\Phi^{-1}(U) = {\Phi^{\prime}}^{-1} \big( U  - \mathrm{const} \big) \cap \mathcal{B}$ for all $U \in \mathrm{codom}(\Phi) = \{ \Phi(X): X \in \mathbb{X}_{\leq N, \mathcal{B}} \}$.

The function $\Phi: \mathbb{X}_{\leq N, \mathcal{B}} \rightarrow \mathrm{codom}(\Phi)$ is a continuous bijection. Its domain is compact because it is a finite union of comapct spaces, that is, $ \mathbb{X}_{\leq N, \mathcal{B}} = \cup_{n \in [N]}  \mathbb{X}_{n, \mathcal{B}}$. Furthermore, its codomain forms a metric space with $\ell_2$ distance; Hence, it is also a Hausdorff space. Therefore, by an inverse function theorem, $\Phi$ has a continuous inverse; \citep{sutherland2009introduction}.
If we let $\alpha = {\Phi}^{-1}$, we arrive at the Proposition's statement.
\end{proof}
\begin{remark}
    \Cref{prop:multivariate_continuous_bijection} gurantees the continuity of $\alpha$ over a compact set $\mathrm{codom}(\Phi) \subset \mathbb{R}^{2dN}$. This function can be continuously extended to the ambient space $\mathbb{R}^{2dN}$; refer to the continuous extension theorem  \citep{fitzpatrick1989klaus}.
\end{remark}
\subsection{Proof of \Cref{thm:nested_permutation_invariant}}
As discussed in the main text, the general net-value function is nested-permutation invariant and take the form of Eqn (\ref{eq:M}) in the main text. In what follows, we provide the detailed version of Theorem \ref{thm:nested_permutation_invariant} and show that such nested-permutation invariant function adapts a nested sum-decomposition aligned with our \DEHNN's node and net update rules.

\begin{restatable*}[Detailed]{theorem}{goldbach}
Let $\Fcal$ be any continuous, nested-permutation invariant, net-value function as in Eqn (\ref{eq:M}). For simplicity, assume both input nets and output of $M$ take values in a compact set $\mathcal{B} \subset \mathbb{R}^d$, 
    a connected compact subset of $\mathbb{R}^d$. We also let $N_v  = \max \{ | \NN(v )|:\forall v  \}$ and  $N_{\ahyperE} = \max \{ |\sinkset(\ahyperE)| : \forall \ahyperE  \} $.  
    Then we have that $\Fcal$ can be expressed as the following sum-decomposition:
    \begin{align*}
     \forall \sigma: \Fcal\Big( \{ \mu_{\anet'} \}_{\anet' \in \NN(\dr_\ahyperE)} , 
\Big\{ \{  \mu_{\anet'} \}_{\anet' \in \NN(v^{\prime})}  \Big\}_{v^{\prime} \in \sinkset_\ahyperE}  \Big) = \rho \Big( \sum_{ \sigma^{\prime} \in \NN(\dr_\ahyperE) } \phi_1 (\mu_{\sigma^{\prime}}),
        \sum_{v^{\prime} \in \sinkset_\ahyperE}\phi_2\big( \sum_{ \sigma^{\prime} \in \NN(v^{\prime}) } \phi_1 (\mu_{\sigma^{\prime}}) \big) \Big),
    \end{align*} 
    where $\phi_1: \mathbb{R}^d \rightarrow  \mathbb{R}^{d^{\prime}(d,N_v)}$, $\phi_2: \mathbb{R}^{d^{\prime}(d,N_v)} \rightarrow \mathbb{R}^{d^{\prime \prime}(d,N_v,N_{\ahyperE})}$, and $\rho: \mathbb{R}^{d^{\prime}(d,N_v)} \times \mathbb{R}^{d^{\prime \prime}(d,N_v, N_{\sigma})} \rightarrow \mathbb{R}^d$ are continuous functions.
\end{restatable*}

\begin{proof}
The general net-value function takes the following form:
\begin{align*}
    \forall \sigma: ~& \Fcal\Big( \{ \mu_{\anet'} \}_{\anet' \in \NN(\dr_\ahyperE)} , 
\Big\{ \{  \mu_{\anet'} \}_{\anet' \in \NN(v^{\prime})}  \Big\}_{v^{\prime} \in \sinkset_\ahyperE}  \Big) \nonumber.
\end{align*}
We assume $\Fcal$ is a continuous function that takes values in $\mathcal{B}$, a connected compact subset of $\mathbb{R}^d$, and $N_v  = \max \{ | \NN(v)|:\forall v  \}$. From Proposition \ref{prop:multivariate_continuous_bijection}, there exist continuous functions $\phi_1$ and $\alpha_1$ such that
\[
    \forall X \in \mathbb{X}_{\leq N_v, \mathcal{B}}  : X = \alpha_1 ( \sum_{x \in X} \phi_1(x) ),
\]
where $\mathrm{codom}(\phi_1) \subseteq \mathbb{R}^{2dN_v}$, that is, the latent dimension depends on $d$ and $N_v$ which we denote as $d^{'}(d,N_v)$. Therefore, we apply the result in Proposition \ref{prop:multivariate_continuous_bijection} to sets of net values (of size at most $N_v$) and express $\Fcal$ follows:
\begin{align*}
    \forall \ahyperE: \Fcal\Big( \{ \mu_{\anet'} \}_{\anet' \in \NN(\dr_\ahyperE)} , 
\Big\{ \{  \mu_{\anet'} \}_{\anet' \in \NN(v^{\prime})}  \Big\}_{v^{\prime} \in \sinkset_\ahyperE}  \Big) &=  \mathcal{F} \Big(  \alpha_1 \big( \sum_{\sigma^{\prime} \in \NN(\dr_\ahyperE)} \phi_1 (\mu_\sigma^{\prime}) \big) , 
    \Big\{ \alpha_1 \big( \sum_{\sigma^{\prime} \in \NN(v^{\prime})} \phi_1 (\sigma^{\prime}) \big)   \Big\}_{v^{\prime} \in \sinkset_\ahyperE}  
    \Big) \\
    &=  \mathcal{F}_1 \Big(  \sum_{\sigma^{\prime} \in \NN(\dr_\ahyperE)} \phi_1 (\mu_\sigma^{\prime}), 
    \Big\{  \sum_{\sigma^{\prime} \in \NN(v^{\prime})} \phi_1 (\sigma^{\prime}) \Big\}_{v^{\prime} \in \sinkset_\ahyperE}  \Big)
\end{align*}
where $\mathcal{F}_1(x, Y) \stackrel{\mathrm{def.}}{=} \mathcal{F}(\alpha_1(x) , \{ \alpha_1(y) \}_{y \in Y})$ for all $x \in \mathrm{codom}(\Phi_1) = \{ \Phi_1(Y) = \sum_{y \in Y} \phi_1(y) : Y \in \mathbb{X}_{\leq N_v, \mathcal{B}} \}$  and any set $Y$ with elements in $\mathrm{codom}(\Phi_1)$. Since both $\alpha_1$ and $\mathcal{F}$ are continuous functions, then $\mathcal{F}_1$ is also a continuous function. \footnote{We can use the matching distance to define a metric on multisets; refer to \citep{tabaghi2023universal} for details.}

Next, we can apply the result of Proposition \ref{prop:multivariate_continuous_bijection} to $\mathcal{F}_1$. The elements of the set $\Big\{ \sum_{\sigma^{\prime} \in \NN(v^{\prime})} \phi_1 (\sigma^{\prime})  \Big\}_{v^{\prime} \in \sinkset_\ahyperE}$ take values in $\mathrm{codom}(\Phi_1)$ ---  a connected compact subset of $\mathbb{R}^{d^{\prime}(d, N_v)}$. If we assume $|\sinkset_\ahyperE| \leq N_{\ahyperE}$ for all $\ahyperE$, Proposition \ref{prop:multivariate_continuous_bijection} claims that there exist continuous functions $\alpha_2$ and $\phi_2$ such that the following holds true:
\begin{align*}
    \forall \ahyperE: \mathcal{F}_1 \Big(  \sum_{\sigma^{\prime} \in \NN(\dr_\ahyperE)} \phi_1 (\mu_\sigma^{\prime}), 
    \Big\{  \sum_{\sigma^{\prime} \in \NN(v^{\prime})} \phi_1 (\sigma^{\prime}) \Big\}_{v^{\prime} \in \sinkset_\ahyperE}  \Big)
    &= \mathcal{F}_1 \Bigg(  \sum_{\sigma^{\prime} \in \NN(\dr_\ahyperE)} \phi_1 (\mu_\sigma^{\prime}), 
    \alpha_2 \Big( \sum_{v^{\prime} \in \sinkset_\ahyperE} \phi_2 \big(   \sum_{\sigma^{\prime} \in \NN(v^{\prime})} \phi_1 (\sigma^{\prime}) \big)   \Big) \Bigg) \\
    &= \rho \Bigg(  \sum_{\sigma^{\prime} \in \NN(\dr_\ahyperE)} \phi_1 (\mu_\sigma^{\prime}), 
     \sum_{v^{\prime} \in \sinkset_\ahyperE} \phi_2 \big(   \sum_{\sigma^{\prime} \in \NN(v^{\prime})} \phi_1 (\sigma^{\prime}) \big)    \Bigg)
\end{align*}
where  $\rho(x, y) \stackrel{\text{def}}{=} \mathcal{F}_1(x , \alpha_2(y) )$ for all $x \in \mathrm{codom}(\Phi_1)$ and $y \in \mathrm{codom}(\Phi_2) \stackrel{\text{def}}{=}   \{ \sum_{z \in Z}\phi_2(z): Z \in \mathrm{codom}(\Phi_1), |Z| \leq N_{\ahyperE} \} $. The function $\rho$ is a composition of continuous functions $\mathcal{F}_1$ and $\alpha_2$. Therefore, it is a continuous function. Also, we have $\mathrm{codom}(\phi_2) \subseteq  \mathbb{R}^{2d^{\prime}(d, N_v) N_{\ahyperE}}$, that is, the latent dimension depends on $d, N_v,$ and $N_\ahyperE$ which we denote as $d^{\prime \prime}(d, N_v,N_\ahyperE)$. Finally, functions $\phi_2$ and $\rho$ are continuous over compact domains $\mathrm{codom}(\Phi_1)$, and $\mathrm{codom}(\Phi_1) \times \mathrm{codom}(\Phi_2)$. Therefore, they can be continuously extended to $\mathbb{R}^{d^{\prime}(d,N_v)}$ and $\mathbb{R}^{d^{\prime}(d,N_v)} \times \mathbb{R}^{d^{\prime \prime}(d,N_v, N_{\sigma})}$.
\end{proof}

\section{Experimental details}

\subsection{Dataset Statistics}
\label{appendix:statistics}

We report the sizes of each design and their cell/net-degrees distribution in Table~\ref{table:dataset}, and net-based demand/wirelength distributions in the Table~\ref{table:distribution}. We can see that (hyper)graphs in our dataset are large and sparse, ranging from 400K to 1.3M nodes. 
with few outliers that have high degrees. We also summarize the more detailed distributions of net-based demand, net-based wirelength, and cell-based congestion in figure\ref{fig:statistics_hist}. 

\subsection{Experiment Setup}
\label{appendix:setup}

We engineer the input features as Cell features and Net features. None of the following features contained placement information or are computed from placement information. We engineer cell features as follows:
\begin{itemize}
\item We analyze the statistics in a design including the minimum and maximum of all cells' width, height. Then, we normalize all these quantities to be in the range $[0, 1]$. We concatenate them with the cell's discrete orientation and cell's degree (number of nets each cell connecting to), that results into the cell's input feature 
vector.
\item We calculate the top-10 eigenvectors of the graph Laplacian of the heterogenous graph as the positional encoding (denoted as LapPE) for every cell and net.
\item We compute the persistence diagram (denoted as PD) for each cell-node $v$ on a directed graph $G = (V, E)$, as we mentioned earlier in \ref{appendix:PD}, as a common cell-node input feature vector for all baselines, to capture the local topological information. 
\item We also compute the degree distribution for each cell-node $v$ based on their $k$-ring \textbf{\emph{undirected}} \emph{neighborhood $G_v$} in the star-graph we introduced in \ref{appendix:PD} to compute the persistence diagrams. However, we ignore the direction when we compute the degree distribution. 

\end{itemize}
For the net features, for all models other than linear Transformer, we initialize the features of each {\bf net} $\ahyperE = (\dr_\ahyperE, \sinkset_\ahyperE)$ as the net's degree (i.e.~number of cells each net connecting to). 
For Linear Transformer, in order to provide better topology information, we instead initialize the features of each {\bf net} $\ahyperE = (\dr_\ahyperE, \sinkset_\ahyperE)$ (denoted as $M(\ahyperE)$) as the average of all the features of the nodes this net contains, computed as, 
\begin{equation}\label{eq:lin_net}
M(\ahyperE) = \frac{1}{|\sinkset_\ahyperE| + 1} ( m(\dr_\sigma) +  \sum_{v^\prime \in \sinkset_\ahyperE} m(v^\prime) ). 
\end{equation}

As also described in the main text, for GCN and GATv2, we use the bipartite graph representation of the input netlist, where there are two types of graph nodes: those corresponding to cells and those corresponding nets. If a cell is contained in a net, then there is an edge between their respective graph nodes. We use {\bf heterogenous} message passing, where nodes corresponding to cells use a different message passing mechanism from nodes corresponding to nets. 

For GCN, GATv2, HyperConv, and all other HNN based models, we used 4 layers with 64 dimension each layer as the setting. For Linear Transformer, we used 2 layers with 64 dimension each layer as the setting. For NetlistGNN we used 4 layers with node dimension 64 and net dimension 64 as the setting. For E-HNN/DE-HNN based models, we used 3 layers with node dimension 64. 

We report the number of parameters, training time per epoch and total training epochs in the Table \ref{table:params}. 
We aim to use similar number of parameters, although currently full-\DEHNN does have the highest number of parameters, with NetlistGNN having the second largest. Interestingly, while they all take similar number of epochs to converge, our \DEHNN{} in fact is much faster per epoch, so overall takes less time to train.
We used two NVIDIA A100-SXM4-80GB GPUs on Linux CentOS Stream system (Ver.8) for both single-design and cross-design experiments. \par

\begin{table}[htbp]
\centering
\resizebox{1.0\columnwidth}{!}{
\begin{tabular}{c|cccccccccc}
\toprule
\textbf{Model} & GCN & GATv2 & HyperConv & Lin. Transformer & NetlistGNN & AllSet & HMPNN & HNHN & base-DE-HNN & full-DE-HNN \\ 
\midrule
\textbf{Num. of parameters} & 218113 & 251905 & 218625 & 203201 & 364743 & 348289 & 235137 & 217729 & 272165 & 383105 \\
\textbf{Time per epoch (single design)} & 2.77 & 2.91 & 1.06 & 3.72 & 1.13 & 0.79 & 0.69 & 0.62 & 0.49 & 0.65\\
\textbf{Total num. of epochs (single design)} & 878 & 775 & 670 & 636 & 716 & 689 & 691 & 699 & 673 & 810\\
\textbf{Time per epoch (cross design)} & 33.20 & 36.28 & 12.28 & 55.37 & 16.51 & 11.35 & 4.14 & 5.37 & 5.26 & 5.50\\
\textbf{Total num. of epochs (cross design)} & 478 & 482 & 473 & 471 & 455 & 460 & 472 & 468 & 479 & 473\\
\bottomrule

\end{tabular}
}
\caption{Comparison of Model complexity for all the models we used. We compare the number of parameters, training time per epoch for single design (on average), total training epoches to converge for single design (on average), trainig time per epoch for cross design, and total training epoches to converge for cross design.  } 
\vspace{-0.5cm}
\label{table:params}
\end{table}

\begin{table}[htbp]
\centering
\begin{tabular}{ccccccccccc}
\toprule
\multirow{1}{*}{\textbf{Design}} & \multirow{1}{*}{\textbf{Cells}} & \multirow{1}{*}{\textbf{Nets}} & \multicolumn{4}{c}{\textbf{Cell-degree}} & \multicolumn{4}{c}{\textbf{Net-degree}} \\
\cmidrule{4-7} \cmidrule{8-11}
& & & \textbf{Min} & \textbf{Max} & \textbf{Mean} & \textbf{STD} & \textbf{Min} & \textbf{Max} & \textbf{Mean} & \textbf{STD}\\ 

\midrule

\multirow{1}{*}{Superblue1} & \multirow{1}{*}{797,938} & \multirow{1}{*}{821,523}
& 0.0 & 1243 & 3.70 & 5.66 & 0.0 & 140605 & 3.58 & 155.85 \\

\midrule

\multirow{1}{*}{Superblue2} & \multirow{1}{*}{951,166} & \multirow{1}{*}{985,117} 
& 0.0 & 1317 & 3.51 & 5.66 & 0.0 & 190487 & 3.39 & 192.22 \\

\midrule

\multirow{1}{*}{Superblue3} & \multirow{1}{*}{901,254} & \multirow{1}{*}{925,667} 
& 0.0 & 2245 & 3.65 & 5.23 & 0.0 & 168630 & 3.55 & 175.91 \\
\midrule

\multirow{1}{*}{Superblue5} & \multirow{1}{*}{727,341} & \multirow{1}{*}{803,681} 
& 0.0 & 1381 & 3.46 & 5.82 & 0.0 & 114259 & 3.13 & 128.03 \\

\midrule

\multirow{1}{*}{Superblue6} & \multirow{1}{*}{998,122} & \multirow{1}{*}{1049,225} 
& 0.0 & 1689 & 3.57 & 4.03 & 0.0 & 179410 & 3.39 & 175.69\\

\midrule

\multirow{1}{*}{Superblue7} & \multirow{1}{*}{1319,052} & \multirow{1}{*}{1339,522} 
& 0.0 & 849 & 3.91 & 3.03 & 0.0 & 265765 & 3.85 & 230.24 \\

\midrule

\multirow{1}{*}{Superblue9} & \multirow{1}{*}{810,812} & \multirow{1}{*}{830.308} 
& 0.0 & 1265 & 3.83 & 4.78 & 0.0 & 129541 & 3.74 & 202.42 \\

\midrule

\multirow{1}{*}{Superblue11} & \multirow{1}{*}{923,355} & \multirow{1}{*}{954,144} 
& 0.0 & 1983 & 3.74 & 6.97 & 0.0 & 203194 & 3.63 & 294.36\\

\midrule

\multirow{1}{*}{Superblue14} & \multirow{1}{*}{604,921} & \multirow{1}{*}{627,036} 
& 0.0 & 1023 & 3.90 & 4.66 & 0.0 & 167911 & 3.76 & 300.30 \\

\midrule

\multirow{1}{*}{Superblue16} & \multirow{1}{*}{671,284} & \multirow{1}{*}{696,983} 
& 0.0 & 1016 & 3.77 & 6.12 & 0.0 & 140741 & 3.63 & 238.58 \\

\midrule

\multirow{1}{*}{Superblue18} & \multirow{1}{*}{459,495} & \multirow{1}{*}{468,888} 
& 0.0 & 1192 & 4.22 & 4.30 & 0.0 & 102047 & 4.14 & 150.74 \\

\midrule

\multirow{1}{*}{Superblue19} & \multirow{1}{*}{495,234} & \multirow{1}{*}{510,258} 
& 0.0 & 1507 & 3.58 & 6.02 & 0.0 & 94682 & 3.48 & 135.31 \\

\bottomrule

\end{tabular}
\caption{Dataset details \& statistics. 1st column in the table shows the name of the design, 2nd column and 3rd column show the number of cells and nets in each design, 4th-7th columns show the distribution of cell-degrees for each design and 8th-11th columns show the distribution of net-degrees for each design.}
\label{table:dataset}
\end{table}

\begin{table}[htbp]
\centering
\resizebox{1.0\columnwidth}{!}{
\begin{tabular}{ccccccccccccc}
\toprule
\multirow{1}{*}{\textbf{Design}} & \multicolumn{4}{c}{\textbf{Net-based demand}} & \multicolumn{4}{c}{\textbf{Net-based wirelength}} & \multicolumn{4}{c}{\textbf{Cell-based congestion}} \\
\cmidrule{2-5} \cmidrule{6-9} \cmidrule{10-13}
& \textbf{Min} & \textbf{Max} & \textbf{Mean} & \textbf{STD} & \textbf{Min} & \textbf{Max} & \textbf{Mean} & \textbf{STD} & \textbf{Min} & \textbf{Max} & \textbf{Mean} & \textbf{STD}\\ 

\midrule

\multirow{1}{*}{Superblue1} 
& 0.0 & 103.50 & 26.24 & 8.01 & 8.91 & 23.92 & 14.80 & 2.45 & 0.0 & 12.00 & 1.21 & 0.55\\

\midrule

\multirow{1}{*}{Superblue2}
& 0.0 & 139.37 & 26.61 & 8.52 & 8.91 & 24.61 & 15.21 & 2.69 & 0.0 & 5.00 & 0.70 & 0.45\\

\midrule

\multirow{1}{*}{Superblue3} 
& 0.0 & 103.50 & 24.07 & 7.31 & 8.91 & 23.83 & 14.91 & 2.45 & 0.0 & 5.19 & 0.90 & 0.47\\
\midrule

\multirow{1}{*}{Superblue5} 
& 0.0 & 1203.25 & 41.62 & 32.00 & 8.91 & 24.06 & 15.68 & 2.96 & 0.0 & 78.92 & 1.27 & 0.94\\

\midrule

\multirow{1}{*}{Superblue6} 
& 0.0 & 980.33 & 33.83 & 27.45 & 8.91 & 23.83 & 14.88 & 2.57 & 0.0 & 74.15 & 1.35 & 1.15\\

\midrule

\multirow{1}{*}{Superblue7}  
& 0.0 & 495.25 & 23.89 & 5.34 & 8.91 & 23.93 & 14.89 & 2.48 & 0.0 & 43.38 & 1.09 & 0.33\\

\midrule

\multirow{1}{*}{Superblue9}
& 0.0 & 79.50 & 22.47 & 8.02 & 8.91 & 23.83 & 14.71 & 2.42 & 0.0 & 5.0 & 0.74 & 0.46\\

\midrule

\multirow{1}{*}{Superblue11} 
& 0.0 & 75.00 & 20.05 & 6.54 & 8.91 & 24.17 & 15.11 & 2.38 & 0.0 & 8.0 & 0.93 & 0.36\\

\midrule

\multirow{1}{*}{Superblue14}
& 0.0 & 401.41 & 23.42 & 9.11 & 8.91 & 23.67 & 15.06 & 2.48 & 0.0 & 46.85 & 1.06 & 0.65\\

\midrule

\multirow{1}{*}{Superblue16} 
& 0.0 & 1091.0 & 28.96 & 14.09 & 8.91 & 23.51 & 15.01 & 2.54 & 0.0 & 65.53 & 1.29 & 0.91\\

\midrule

\multirow{1}{*}{Superblue18} 
& 0.0 & 50.0 & 20.39 & 4.13 & 8.91 & 23.26 & 14.98 & 2.28 & 0.0 & 4.0 & 0.91 & 0.27\\

\midrule

\multirow{1}{*}{Superblue19} 
& 0.0 & 87.83 & 23.05 & 6.40 & 8.91 & 23.53 & 14.86 & 2.36 & 0.0 & 6.0 & 0.96 & 0.50\\

\bottomrule

\end{tabular}
}
\caption{Dataset details \& statistics. This table shows the distribution of net-based demands for each design, distribution of net-based logged wirelength for each design and the distribution of cell-based congestion values for each design. Remind that cell-based congestion, as we described in the paper, we classify the cell-based congestion values (computed as the ratio of demand/capacity) into (a) [0, 0.9], \emph{not-congested}; and (b) [0.9, $\inf$]; \emph{congested}.}
\label{table:distribution}
\end{table}

\begin{figure}[htbp]
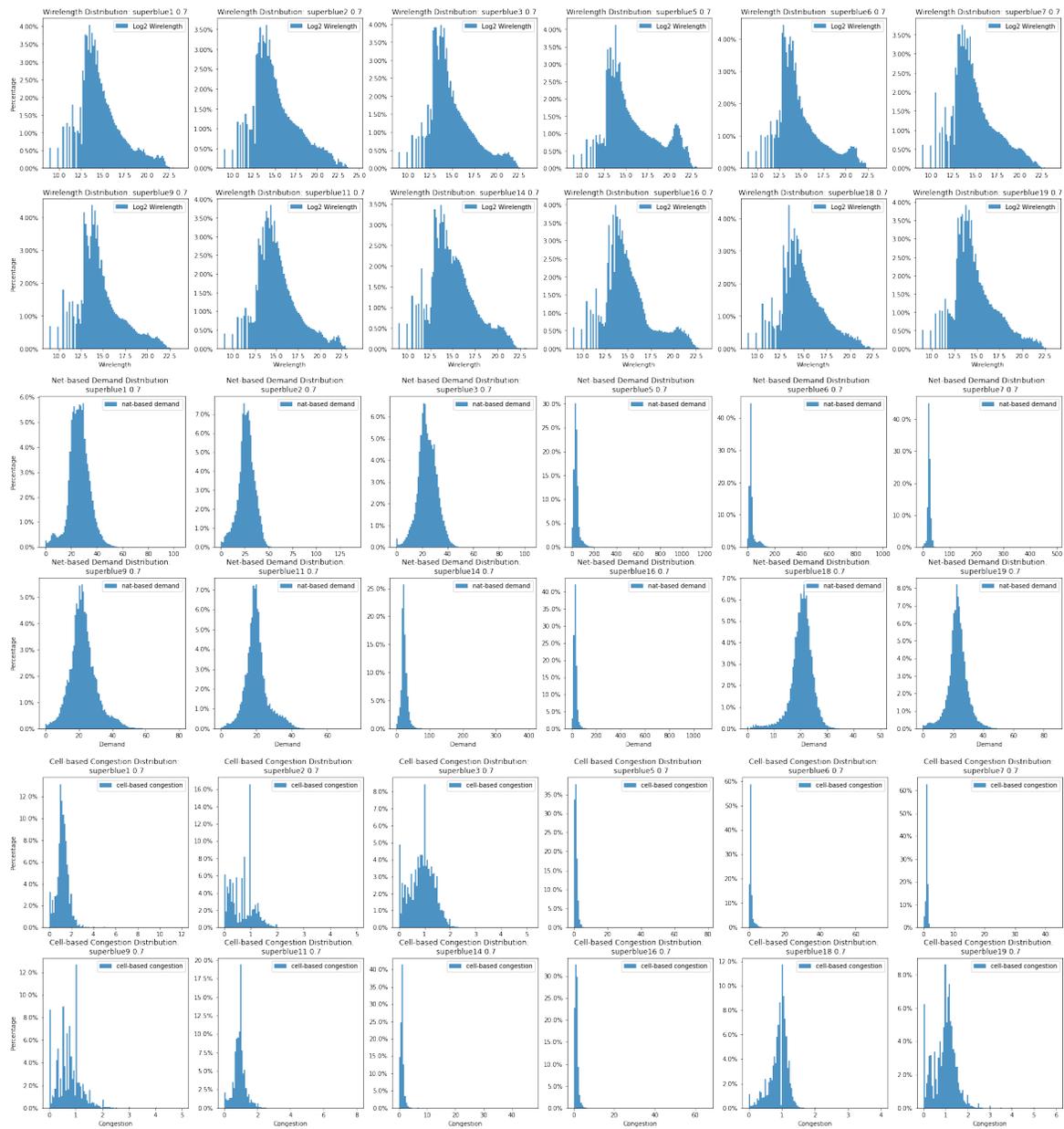

\centering
\includegraphics[width=0.90\textwidth]{figures/wirelength_distribution.pdf}
\includegraphics[width=0.90\textwidth]{figures/demand_distribution.pdf}
\includegraphics[width=0.90\textwidth]{figures/congestion_distribution.pdf}
\caption{{\small Net-based wirelength, net-based demand and cell-based congestion distributions of each design.}}
\label{fig:statistics_hist}
\end{figure}

\subsection{More experimental results}
\label{appendix:results}
\begin{table}[htbp]
\centering
\resizebox{1.0\columnwidth}{!}{
\begin{tabular}{c|ccccc}
\cmidrule{1-4}
& \multicolumn{3}{c}{\textbf{net-based wirelength regression}} \\
\cmidrule{2-4}
\textbf{Model} & \textbf{RMSE} $\downarrow$ & \textbf{MAE} $\downarrow$ & \textbf{Pearson} $\uparrow$ \\ 

\midrule
{\sf base E-HNN} & $1.818$ & $1.344$ & $0.731$ \\
{\sf base \textbf{\DEHNN}}  & $1.751$ & $1.269$ & $0.748$ \\
{\sf base \textbf{\DEHNN}+PD} & $1.731$ & $1.257$ & $0.754$ \\
{\sf base \textbf{\DEHNN}+PD+S. VN} & $1.724$ & $1.253$ & $0.758$ \\
{\sf full \DEHNN} & $1.689$ & $1.245$ & $0.770$ \\
\bottomrule

\end{tabular}

\begin{tabular}{|ccccc}
\toprule
\multicolumn{3}{|c}{\textbf{net-based demand regression}} \\
\midrule
\textbf{RMSE} $\downarrow$ & \textbf{MAE} $\downarrow$ & \textbf{Pearson} $\uparrow$ \\ 

\midrule
$9.228$ & $5.959$ & $0.591$ \\
$8.997$ & $5.764$ & $0.630$ \\
$8.765$ & $5.526$ & $0.647$ \\
$8.687$ & $5.519$ & $0.658$ \\
$8.381$ & $5.334$ & $0.683$ \\
\bottomrule

\end{tabular}

\begin{tabular}{|ccccc}
\toprule
\multicolumn{3}{|c}{\textbf{cell-based congestion classification}} \\
\midrule
\textbf{Precision} $\uparrow$ & \textbf{Recall} $\uparrow$ & \textbf{F\_score} $\uparrow$ \\ 

\midrule
$0.816$ & $0.864$ & $0.829$ \\
$0.824$ & $0.860$ & $0.840$ \\
$0.830$ & $0.869$ & $0.847$ \\
$0.832$ & $0.874$ & $0.851$ \\
$0.833$ & $0.876$ & $0.853$ \\
\bottomrule

\end{tabular}

}
\caption{Ablation Study: Full experimental results to show the improvements from \sf base E-HNN to \sf full \DEHNN.}
\label{table:ab}
\end{table}

\begin{table}[htbp]
\centering
\resizebox{1.0\columnwidth}{!}{
\begin{tabular}{c|ccccc}
\cmidrule{1-4}
& \multicolumn{3}{c}{\textbf{net-based wirelength regression}} \\
\cmidrule{2-4}
\textbf{Model} & \textbf{RMSE} $\downarrow$ & \textbf{MAE} $\downarrow$ & \textbf{Pearson} $\uparrow$ \\ 

\midrule
{\sf NetlistGNN with no PD} & $1.818$ & $1.344$ & $0.731$ \\
{\sf NetlistGNN+PD} & $1.773$ & $1.320$ & $0.740$ \\
\midrule
\textbf{Improvement} & 2.5\% & 1.8\% & 1.2\%\\
\midrule \midrule
{\sf GCN with no PD} & $1.809$ & $1.326$ & $0.735$ \\
{\sf GCN+PD} & $1.762$ & $1.276$ & $0.750$ \\
\midrule
\textbf{Improvement} & 1.9\% & 3.6\% & 5.2\%\\
\midrule \midrule
{\sf GATv2 with no PD} & $1.920$ & $1.401$ & $0.659$ \\
{\sf GATv2+PD} & $1.812$ & $1.330$ & $0.687$ \\
\midrule
\textbf{Improvement} & 0.7\% & 0.6\% & 1.6\%\\
\bottomrule

\end{tabular}

\begin{tabular}{|ccccc}
\toprule
\multicolumn{3}{|c}{\textbf{net-based demand regression}} \\
\midrule
\textbf{RMSE} $\downarrow$ & \textbf{MAE} $\downarrow$ & \textbf{Pearson} $\uparrow$ \\ 

\midrule
$9.237$ & $6.060$ & $0.592$ \\
$9.063$ & $5.839$ & $0.623$ \\
\midrule 
1.9\% & 3.6\% & 5.2\% \\
\midrule \midrule
$9.698$ & $6.453$ & $0.547$ \\
$9.321$ & $6.163$ & $0.570$ \\
\midrule
3.9\% & 4.5\% & 4.2\% \\
\midrule \midrule
$9.710$ & $6.392$ & $0.539$ \\
$9.342$ & $6.118$ & $0.561$ \\
\midrule
3.8\% & 4.3\% & 4.1\% & \\
\bottomrule

\end{tabular}

\begin{tabular}{|ccccc}
\toprule
\multicolumn{3}{|c}{\textbf{cell-based congestion classification}} \\
\midrule
\textbf{Precision} $\uparrow$ & \textbf{Recall} $\uparrow$ & \textbf{F\_score} $\uparrow$ \\ 

\midrule
$0.806$ & $0.855$ & $0.818$ \\
$0.812$ & $0.860$ & $0.831$ \\
\midrule
0.7\% & 0.6\% & 1.6\% \\
\midrule \midrule
$0.746$ & $0.837$ & $0.784$ \\
$0.761$ & $0.857$ & $0.802$ \\
\midrule
2.0\% & 2.4\% & 2.3\% \\
\midrule \midrule
$0.802$ & $0.856$ & $0.811$ \\
$0.810$ & $0.864$ & $0.835$ \\
\midrule
1.0\% & 1.0\% & 3.0\% \\
\bottomrule

\end{tabular}

} 
\caption{Ablation Study: the effect of using persistence diagrams (PDs) to three best-performing baselines. For each method, the 3rd row shows the percentage of improvement after using PD as part of the input features. Note that in the main text, the results we reported are those baselines+PD.}
\label{table:pd-ab}
\end{table}

In the main text, we have reported plots for ablation studies of different strategies used in our model. Here in Table \ref{table:ab} we report detailed numbers for all three tasks: net-based wirelength regression, net-based demand regression, and cell-based congestion classification across all the designs. 

Besides ablation study over our models, as we mentioned in main paper, adding the  persistence diagrams (PDs) features are beneficial for both our method and baselines. We have already shown the benefits for our model in the ablation studies in the main text and above. Here in Table \ref{table:pd-ab}, we show its benefits to the three best performing baseline models: NetlistGNN, GCN, GATv2. As we can see, using PDs improve all these models. Note that in the main text, when we report results of these baseline models, we are already reporting results with PDs. 

We show more detailed experimental results of single-design experiments for net-based wirelength regression in Table \ref{table:single-design-hpwl-full}, net-based regression in Table \ref{table:single-design-demand-full} and cell-based congestion classification in Table \ref{table:single-design-classify-full} for each design. We also show more results of cross-design experiments for net-based wirelength regression, net-based demand regression and cell-based congestion classification in Table \ref{table:cross-design-hpwl-full}. Similar to how we report the results in the main paper, we highlight the models' results with the best performance in red, and highlight the models (other than our models) that with second best performance as blue. 
Interestingly, in the single design case, it appears that the methods AllSet and NetlistGNN often perform the best among the baselines for net-length and net-based demand regression tasks, while for the cell-based congestion classification, other models (e.g, GATv2) sometimes perform the best among baselines. In cross-design experiments, GCN and GATv2 sometimes emerge as winners. 
In the cases that a baseline model is better than our model, we highlight that baseline model's results in red instead. 

\paragraph{Preliminary exploration with placement information.} 
In this paper, we focus on the case when our input do not have placement information (i.e, coordinates of cells). We also conducted some preliminary experiments to examine the effect of adding cell placement information to initial node features, and use placement information based partitioning instead of Metis. We first partition the nodes in netlist circuit (after placement) into $k$ bounding boxes $V_i$ with fixed width and height as $0.8$ millimeters. 
The number $k$ is depending on the width and height of each netlist circuit. 
With coordinates of cells added and bounding boxes based partitioning (for full-\DEHNN{}), we rerun both single-design and cross-design net-based demand regression experiments on superblue19. See Table \ref{table:placement_test} below. Placement information helps to improve the performance on average around 10\% for MAE and RMSE, and the improvement is more substantial in terms of Pearson Correlation. We aim to explore more about how to effectively leverage placement information in the future works.

\begin{table}[htbp]
\centering
\resizebox{0.95\columnwidth}{!}{
\begin{tabular}{ccccc}
\toprule
\textbf{Design} & \textbf{Model} & \textbf{RMSE} $\downarrow$ & \textbf{MAE} $\downarrow$ & \textbf{Pearson} $\uparrow$  \\
\midrule

\multirow{9}{*}{Superblue1} 
& GCN & 1.731 & 1.243 & 0.751 \\
& GATv2 & 1.742 & 1.253 & 0.742 \\
& HyperConv & 1.988 & 1.413 & 0.645 \\
& AllSet & \color{blue}1.700 & \color{blue}1.235 & \color{blue}0.757 \\
& HMPNN & 1.779 & 1.291 & 0.727 \\
& HNHN & 1.813 & 1.319 & 0.720 \\
& NetlistGNN & 1.761 & 1.276 & 0.735 \\
\cmidrule{2-5}%
& base \textbf{\DEHNN} & 1.774 & 1.274 & 0.731 \\
& \textbf{full \DEHNN} & \textbf{\color{red}1.657} & \textbf{\color{red}1.203} & \textbf{\color{red}0.771} \\
\cmidrule{2-5}%
& \textbf{Improvement} & 2.5$\%$ & 2.6$\%$ & 1.8$\%$ \\

\midrule
\multirow{9}{*}{Superblue2} 
& GCN & 2.011 & 1.487 & 0.714 \\
& GATv2 & 1.931 & 1.485 & 0.723 \\
& HyperConv & 2.217 & 1.661 & 0.641 \\
& AllSet & \color{blue}1.817 & \color{blue}1.353 & \color{blue}0.775 \\
& HMPNN & 1.981 & 1.480 & 0.723 \\
& HNHN & 1.992 & 1.488 & 0.724 \\
& NetlistGNN & 1.857 & 1.410 & 0.748 \\
\cmidrule{2-5}%
& base \textbf{\DEHNN} & 1.933 & 1.438 & 0.741 \\
& \textbf{full \DEHNN} & \textbf{\color{red}1.810} & \textbf{\color{red}1.344} & \textbf{\color{red}0.778} \\
\cmidrule{2-5}%
& \textbf{Improvement} & 0.4$\%$ & 0.7$\%$ & 0.4$\%$ \\

\midrule
\multirow{9}{*}{Superblue3} 
& GCN & 1.741 & 1.273 & 0.738 \\
& GATv2 & 1.716 & 1.270 & 0.748 \\
& HyperConv & 1.941 & 1.438 & 0.661 \\
& AllSet & \color{blue}1.685 & \color{blue}1.239 & \color{blue}0.759 \\
& HMPNN & 1.752 & 1.300 & 0.734 \\
& HNHN & 1.769 & 1.303 & 0.729 \\
& NetlistGNN & 1.689 & \color{blue}1.239 & 0.754 \\
\cmidrule{2-5}%
& base \textbf{\DEHNN} & 1.765 & 1.271 & 0.741 \\
& \textbf{full \DEHNN} & \textbf{\color{red}1.679} & \textbf{\color{red}1.234} & \textbf{\color{red}0.761} \\
\cmidrule{2-5}%
& \textbf{Improvement} & 0.4$\%$ & 0.4$\%$ & 0.3$\%$ \\

\midrule
\multirow{9}{*}{Superblue5} 
& GCN & 1.892 & 1.428 & 0.801 \\
& GATv2 & 1.909 & 1.438 & 0.793 \\
& HyperConv & 2.114 & 1.606 & 0.742 \\
& AllSet & \color{blue}1.842 & \color{blue}1.361 & \color{blue}0.810 \\
& HMPNN & 2.224 & 1.754	& 0.706 \\
& HNHN & 2.069 & 1.562 & 0.756 \\
& NetlistGNN & 1.915 & 1.406 & 0.790 \\
\cmidrule{2-5}%
& base \textbf{\DEHNN} & 1.881 & 1.393 & 0.799 \\
& \textbf{full \DEHNN} & \textbf{\color{red}1.795} & \textbf{\color{red}1.330} & \textbf{\color{red}0.822} \\
\cmidrule{2-5}%
& \textbf{Improvement} & 2.6$\%$ & 2.3$\%$ & 1.5$\%$ \\

\midrule
\multirow{9}{*}{Superblue6} 
& GCN & 1.811 & 1.341 & 0.740 \\
& GATv2 & 1.782 & 1.322 & 0.751 \\
& HyperConv & 1.928 & 1.435 & 0.702 \\
& AllSet & \color{blue}1.733 & \color{blue}1.260 & \color{blue}0.766 \\
& HMPNN & 1.851 & 1.360 & 0.730 \\
& HNHN & 1.859 & 1.365 & 0.727 \\
& NetlistGNN & 1.940 & 1.452 & 0.725 \\
\cmidrule{2-5}%
& base \textbf{\DEHNN} & 1.786 & 1.296 & 0.749 \\
& \textbf{full \DEHNN} & \textbf{\color{red}1.689} & \textbf{\color{red}1.233} & \textbf{\color{red}0.780} \\
\cmidrule{2-5}%
& \textbf{Improvement} & 2.5$\%$ & 2.1$\%$ & 1.8$\%$ \\

\midrule
\multirow{9}{*}{Superblue7} 
& GCN & 1.842 & 1.315 & 0.710 \\
& GATv2 & 1.847 & 1.371 & 0.699 \\
& HyperConv & 2.010 & 1.479 & 0.628 \\
& AllSet & \color{blue}1.753 & \color{blue}1.295 & \color{blue}0.733 \\
& HMPNN & 1.889 & 1.409 & 0.681 \\
& HNHN & 1.858 & 1.370 & 0.694 \\
& NetlistGNN & 1.801 & 1.316 & 0.721 \\
\cmidrule{2-5}%
& base \textbf{\DEHNN} & 1.760 & 1.291 & 0.701 \\
& \textbf{full \DEHNN} & \textbf{\color{red}1.719} & \textbf{\color{red}1.267} & \textbf{\color{red}0.747} \\
\cmidrule{2-5}%
& \textbf{Improvement} & 1.9$\%$ & 2.2$\%$ & 1.9$\%$ \\

\bottomrule
\end{tabular}

\begin{tabular}{ccccc}
\toprule
\textbf{Design} & \textbf{Model} & \textbf{RMSE} $\downarrow$ & \textbf{MAE} $\downarrow$ & \textbf{Pearson} $\uparrow$  \\
\midrule
\multirow{9}{*}{Superblue9} 
& GCN & 1.819 & 1.338 & 0.697 \\
& GATv2 & 1.853 & 1.381 & 0.683 \\
& HyperConv & 1.966 & 1.435 & 0.639 \\
& AllSet & \color{blue}1.684 & \color{blue}1.238 & \color{blue}0.750 \\
& HMPNN & 1.827 & 1.355 & 0.695 \\
& HNHN & 1.821 & 1.335 & 0.701 \\
& NetlistGNN & 1.805 & 1.325 & 0.701 \\
\cmidrule{2-5}%
& base \textbf{\DEHNN} & 1.765 & 1.287 & 0.720 \\
& \textbf{full \DEHNN} & \textbf{\color{red}1.663} & \textbf{\color{red}1.226} & \textbf{\color{red}0.757} \\
\cmidrule{2-5}%
& \textbf{Improvement} & 1.2$\%$ & 1.0$\%$ & 0.9$\%$ \\

\midrule
\multirow{9}{*}{Superblue11} 
& GCN & 1.831 & 1.356 & 0.694 \\
& GATv2 & 1.814 & 1.349 & 0.702 \\
& HyperConv & 1.986 & 1.468 & 0.633 \\
& AllSet & \color{blue}1.717 & \color{blue}1.274 & \color{blue}0.740 \\
& HMPNN & 1.830 & 1.369 & 0.696 \\
& HNHN & 1.834 & 1.353 & 0.697 \\
& NetlistGNN & 1.780 & 1.291 & 0.707 \\
\cmidrule{2-5}%
& base \textbf{\DEHNN} & 1.741 & 1.290 & 0.731 \\
& \textbf{full \DEHNN} & \textbf{\color{red}1.690} & \textbf{\color{red}1.257} & \textbf{\color{red}0.751} \\
\cmidrule{2-5}%
& \textbf{Improvement} & 1.6$\%$ & 1.3$\%$ & 1.5$\%$ \\

\midrule
\multirow{9}{*}{Superblue14} 
& GCN & 1.792 & 1.331 & 0.739 \\
& GATv2 & 1.794 & 1.361 & 0.738 \\
& HyperConv & 2.116 & 1.547 & 0.605 \\
& AllSet & \color{blue}1.756 & \color{blue}1.299 & \color{blue}0.750 \\
& HMPNN & 1.873 & 1.415 & 0.707 \\
& HNHN & 1.895 & 1.404 & 0.703 \\
& NetlistGNN & 1.812 & 1.363 & 0.731 \\
\cmidrule{2-5}%
& base \textbf{\DEHNN} & 1.816 & 1.336 & 0.730 \\
& \textbf{full \DEHNN} & \textbf{\color{red}1.728} & \textbf{\color{red}1.282} & \textbf{\color{red}0.760} \\
\cmidrule{2-5}%
& \textbf{Improvement} & 1.6$\%$ & 1.3$\%$ & 1.3$\%$ \\

\midrule
\multirow{9}{*}{Superblue16} 
& GCN & 1.763 & 1.265 & 0.741 \\
& GATv2 & 1.741 & 1.267 & 0.751 \\
& HyperConv & 2.047 & 1.446 & 0.639 \\
& AllSet & \color{blue}1.688 & \color{blue}1.207 & \color{blue}0.772 \\
& HMPNN & 1.896 & 1.362 & 0.701 \\
& HNHN & 1.816 & 1.312 & 0.731 \\
& NetlistGNN & 1.773 & 1.274 & 0.736 \\
\cmidrule{2-5}%
& base \textbf{\DEHNN} & 1.705 & 1.218 & 0.767 \\
& \textbf{full \DEHNN} & \textbf{\color{red}1.656} & \textbf{\color{red}1.194} & \textbf{\color{red}0.782} \\
\cmidrule{2-5}%
& \textbf{Improvement} & 1.9$\%$ & 1.1$\%$ & 1.3$\%$ \\

\midrule
\multirow{9}{*}{Superblue18} 
& GCN & 1.635 & 1.249 & 0.739 \\
& GATv2 & 1.701 & 1.246 & 0.714 \\
& HyperConv & 1.937 & 1.345 & 0.681 \\
& AllSet & 1.664 & 1.246 & 0.730 \\
& HMPNN & 1.769 & 1.335 & 0.686 \\
& HNHN & 1.752 & 1.314 & 0.696 \\
& NetlistGNN & \textbf{\color{red}1.625} & \textbf{\color{red}1.213} & \textbf{\color{red}0.752} \\
\cmidrule{2-5}%
& base \textbf{\DEHNN} & 1.653 & 1.263 & 0.735 \\
& \textbf{full \DEHNN} & 1.632 & 1.219 & 0.743 \\
\cmidrule{2-5}%
& \textbf{Improvement} & 0.4$\%$ & 0.5$\%$ & 1.2$\%$ \\

\midrule
\multirow{9}{*}{Superblue19} 
& GCN & 1.637 & 1.214 & 0.762 \\
& GATv2 & 1.634 & 1.198 & 0.765 \\
& HyperConv & 1.910 & 1.405 & 0.666 \\
& AllSet & \color{blue}1.582 & \color{blue}1.171 & \color{blue}0.783 \\
& HMPNN & 1.705 & 1.255 & 0.739 \\
& HNHN & 1.752 & 1.288 & 0.727 \\
& NetlistGNN & 1.635 & 1.205 & 0.765 \\
\cmidrule{2-5}%
& base \textbf{\DEHNN} & 1.641 & 1.208 & 0.763 \\
& \textbf{full \DEHNN} & \textbf{\color{red}1.557} & \textbf{\color{red}1.153} & \textbf{\color{red}0.792} \\
\cmidrule{2-5}%
& \textbf{Improvement} & 1.6$\%$ & 1.5$\%$ & 1.1$\%$ \\

\bottomrule
\end{tabular}
}
\caption{Average results of single-design net-based hpwl(wirelength) regression for each design, based on 4-fold cross validations. Last row ``Improvement'' refers to the improvement of our full \DEHNN{} model over the best baseline approach for each metric.}
\label{table:single-design-hpwl-full}
\end{table}

\begin{table}[htbp]
\centering
\resizebox{0.95\columnwidth}{!}{
\begin{tabular}{ccccc}
\toprule
\textbf{Design} & \textbf{Model} & \textbf{RMSE} $\downarrow$ & \textbf{MAE} $\downarrow$ & \textbf{Pearson} $\uparrow$  \\
\midrule

\multirow{11}{*}{Superblue1} 
& GCN & 6.469 & 4.979 & 0.595 \\
& GATv2 & 6.409 & 4.964 & 0.612 \\
& HyperConv & 6.662 & 4.951 & 0.224 \\
& AllSet & 6.100 & \color{blue}4.587 & 0.650 \\
& HMPNN & 6.770 & 5.206 & 0.541 \\
& HNHN & 6.394 & 4.825 & 0.610 \\
& Lin. Transformers & 7.991 & 6.046 & 0.089 \\
& NetlistGNN & \color{blue}6.039 & 4.623 & \color{blue}0.660 \\
\cmidrule{2-5}%
& base \textbf{\DEHNN} & 6.093 & 4.670 & 0.653 \\
& \textbf{full \DEHNN} & \textbf{\color{red}5.674} & \textbf{\color{red}4.263} & \textbf{\color{red}0.709} \\
\cmidrule{2-5}%
& \textbf{Improvement} & 6.0$\%$ & 7.1$\%$ & 7.4$\%$ \\

\midrule
\multirow{11}{*}{Superblue2} 
& GCN & 6.556 & 5.245 & 0.641 \\
& GATv2 & 6.736 & 5.288 & 0.616 \\
& HyperConv & 7.654 & 6.151 & 0.374 \\
& AllSet & 6.430 & 4.981 & 0.659 \\
& HMPNN & 6.982 & 5.501 & 0.579 \\
& HNHN & 6.699 & 5.217 & 0.625 \\
& Lin. Transformers & 8.251 & 6.356 & 0.313 \\
& NetlistGNN & \color{blue}6.259 & \color{blue}4.932 & \color{blue}0.682 \\
\cmidrule{2-5}%
& base \textbf{\DEHNN} & 6.399 & 5.035 & 0.663 \\
& \textbf{full \DEHNN} & \textbf{\color{red}5.966} & \textbf{\color{red}4.637} & \textbf{\color{red}0.718} \\
\cmidrule{2-5}%
& \textbf{Improvement} & 4.7$\%$ & 6.0$\%$ & 5.3$\%$ \\

\midrule
\multirow{11}{*}{Superblue3} 
& GCN & 5.789 & 4.512 & 0.612 \\
& GATv2 & 5.837 & 4.558 & 0.565 \\
& HyperConv & 6.468 & 5.149 & 0.265 \\
& AllSet & \color{blue}5.265 & \color{blue}4.014 & \color{blue}0.695 \\
& HMPNN & 6.022 & 4.713 & 0.572 \\
& HNHN & 5.686 & 4.338 & 0.631 \\
& Lin. Transformers & 7.046 & 5.493 & 0.264 \\
& NetlistGNN & 5.414 & 4.214 & 0.673 \\
\cmidrule{2-5}%
& base \textbf{\DEHNN} & 5.423 & 4.188 & 0.670 \\
& \textbf{full \DEHNN} & \textbf{\color{red}5.041} & \textbf{\color{red}3.837} & \textbf{\color{red}0.725} \\
\cmidrule{2-5}%
& \textbf{Improvement} & 4.3$\%$ & 4.4$\%$ & 4.3$\%$ \\

\midrule
\multirow{11}{*}{Superblue5} 
& GCN & \color{blue}27.169 & 14.867 & 0.504 \\
& GATv2 & 27.343 & 14.754 & 0.508 \\
& HyperConv & 29.563 & 15.817 & 0.107 \\
& AllSet & 27.881 & 14.632 & 0.490 \\
& HMPNN & 28.753 & 15.485 & 0.439 \\
& HNHN & 28.314 & 15.309 & 0.473 \\
& Lin. Transformers & 32.614 & 16.889 & 0.076 \\
& NetlistGNN & 27.586 & \color{blue}14.470 & \color{blue}0.515 \\
\cmidrule{2-5}%
& base \textbf{\DEHNN} & 27.205 & 14.129 & 0.536 \\
& \textbf{full \DEHNN} & \textbf{\color{red}26.684} & \textbf{\color{red}13.512} & \textbf{\color{red}0.565} \\
\cmidrule{2-5}%
& \textbf{Improvement} & 1.8$\%$ & 6.6$\%$ & 9.7$\%$ \\

\midrule
\multirow{11}{*}{Superblue6} 
& GCN & 21.963 & 12.692 & 0.607 \\
& GATv2 & 21.492 & 12.119 & 0.629 \\
& HyperConv & 25.615 & 13.356 & 0.128 \\
& AllSet & 17.945 & 10.156 & 0.759 \\
& HMPNN & 21.868 & 12.633 & 0.611 \\
& HNHN & \color{blue}17.735 & \color{blue}10.094 & \color{blue}0.767 \\
& Lin. Transformers & 28.807 & 14.583 & 0.119 \\
& NetlistGNN & 20.238 & 11.696 & 0.697 \\
\cmidrule{2-5}%
& base \textbf{\DEHNN} & 19.935 & 11.227 & 0.694 \\
& \textbf{full \DEHNN} & \textbf{\color{red}16.946} & \textbf{\color{red}9.680} & \textbf{\color{red}0.790} \\
\cmidrule{2-5}%
& \textbf{Improvement} & 4.4$\%$ & 4.1$\%$ & 3.0$\%$ \\

\midrule
\multirow{11}{*}{Superblue7} 
& GCN & 4.243 & 3.064 & 0.600 \\
& GATv2 & 4.403 & 3.247 & 0.561 \\
& HyperConv & 4.689 & 3.327 & 0.225 \\
& AllSet & 4.201 & 2.991 & 0.621 \\
& HMPNN & 4.527 & 3.246 & 0.541 \\
& HNHN & 4.458 & 3.165 & 0.557 \\
& Lin. Transformers & 5.245 & 3.752 & 0.139 \\
& NetlistGNN & \color{blue}4.115 & \color{blue}2.986 & \color{blue}0.634 \\
\cmidrule{2-5}%
& base \textbf{\DEHNN} & 4.110 & 2.957 & 0.631 \\
& \textbf{full \DEHNN} & \textbf{\color{red}3.971} & \textbf{\color{red}2.860} & \textbf{\color{red}0.662} \\
\cmidrule{2-5}%
& \textbf{Improvement} & 3.5$\%$ & 4.2$\%$ & 4.4$\%$ \\

\bottomrule
\end{tabular}

\begin{tabular}{ccccc}
\toprule
\textbf{Design} & \textbf{Model} & \textbf{RMSE} $\downarrow$ & \textbf{MAE} $\downarrow$ & \textbf{Pearson} $\uparrow$  \\
\midrule

\multirow{11}{*}{Superblue9} 
& GCN & 6.871 & 5.156 & 0.520 \\
& GATv2 & 6.893 & 5.139 & 0.512 \\
& HyperConv & 7.014 & 5.241 & 0.289 \\
& AllSet & \color{blue}6.184 & \color{blue}4.576 & \color{blue}0.640 \\
& HMPNN & 7.152 & 5.367 & 0.467 \\
& HNHN & 6.544 & 4.884 & 0.582 \\
& Lin. Transformers & 8.007 & 5.934 & 0.092 \\
& NetlistGNN & 6.511 & 4.796 & 0.589 \\
\cmidrule{2-5}%
& base \textbf{\DEHNN} & 2.990 & 2.228 & 0.696 \\
& \textbf{full \DEHNN} & \textbf{\color{red}5.685} & \textbf{\color{red}4.237} & \textbf{\color{red}0.709} \\
\cmidrule{2-5}%
& \textbf{Improvement} & 8.1$\%$ & 7.4$\%$ & 10.8$\%$ \\

\midrule
\multirow{11}{*}{Superblue11} 
& GCN & 5.693 & 4.224 & 0.502 \\
& GATv2 & 5.684 & 4.203 & 0.504 \\
& HyperConv & 6.243 & 4.523 & 0.105 \\
& AllSet & \color{blue}5.115 & \color{blue}3.792 & \color{blue}0.625 \\
& HMPNN & 5.974 & 4.402 & 0.418 \\
& HNHN & 5.277 & 3.882 & 0.592 \\
& Lin. Transformers & 6.576 & 4.678 & 0.034 \\
& NetlistGNN & 5.176 & 3.830 & 0.617 \\
\cmidrule{2-5}%
& base \textbf{\DEHNN} & 5.214 & 3.855 & 0.608 \\
& \textbf{full \DEHNN} & \textbf{\color{red}4.918} & \textbf{\color{red}3.677} & \textbf{\color{red}0.666} \\
\cmidrule{2-5}%
& \textbf{Improvement} & 3.9$\%$ & 3.0$\%$ & 6.6$\%$ \\

\midrule
\multirow{11}{*}{Superblue14} 
& GCN & 7.261 & 4.827 & 0.584 \\
& GATv2 & 7.370 & 4.825 & 0.545 \\
& HyperConv & 8.210 & 5.241 & 0.210 \\
& AllSet & 7.162 & 4.565 & 0.619 \\
& HMPNN & 7.687 & 4.992 & 0.540 \\
& HNHN & 7.327 & 4.693 & 0.597 \\
& Lin. Transformers & 8.853 & 6.168 & 0.176 \\
& NetlistGNN & \color{blue}6.872 & \color{blue}4.444 & \color{blue}0.642 \\
\cmidrule{2-5}%
& base \textbf{\DEHNN} & 6.874 & 4.453 & 0.639 \\
& \textbf{full \DEHNN} & \textbf{\color{red}6.533} & \textbf{\color{red}4.211} & \textbf{\color{red}0.684} \\
\cmidrule{2-5}%
& \textbf{Improvement} & 4.9$\%$ & 5.2$\%$ & 6.5$\%$ \\

\midrule
\multirow{11}{*}{Superblue16} 
& GCN & \textbf{\color{red}11.774} & 8.242 & 0.391 \\
& GATv2 & 12.853 & 8.283 & 0.377 \\
& HyperConv & 16.501 & 9.486 & 0.175 \\
& AllSet & 12.558 & \color{blue}7.837 & \color{blue}0.469 \\
& HMPNN & 13.539 & 8.582 & 0.312 \\
& HNHN & 12.720 & 8.082 & 0.446 \\
& Lin. Transformers & 14.020 & 8.827 & 0.003 \\
& NetlistGNN & 12.982 & 8.385 & 0.353 \\
\cmidrule{2-5}%
& base \textbf{\DEHNN} & 12.282 & 7.946 & 0.465 \\
& \textbf{full \DEHNN} & 11.867 & \textbf{\color{red}7.644} & \textbf{\color{red}0.520} \\
\cmidrule{2-5}%
& \textbf{Improvement} & - & 2.5$\%$ & 10.9$\%$ \\

\midrule
\multirow{11}{*}{Superblue18} 
& GCN & 3.061 & 2.262 & 0.681 \\
& GATv2 & 3.102 & 2.285 & 0.672 \\
& HyperConv & 4.013 & 2.915 & 0.255 \\
& AllSet & 3.057 & 2.294 & 0.674 \\
& HMPNN & 3.246 & 2.446 & 0.624 \\
& HNHN & 3.208 & 2.377 & 0.637 \\
& Lin. Transformers & 4.090 & 2.913 & 0.154 \\
& NetlistGNN & \color{blue}2.882 & \color{blue}2.173 & \color{blue}0.726 \\
\cmidrule{2-5}%
& base \textbf{\DEHNN} & 2.990 & 2.228 & 0.696 \\
& \textbf{full \DEHNN} & \textbf{\color{red}2.855} & \textbf{\color{red}2.136} & \textbf{\color{red}0.730} \\
\cmidrule{2-5}%
& \textbf{Improvement} & 0.9$\%$ & 1.7$\%$ & 0.6$\%$ \\

\midrule
\multirow{11}{*}{Superblue19} 
& GCN & 5.034 & 3.734 & 0.616 \\
& GATv2 & 4.949 & 3.691 & 0.636 \\
& HyperConv & 5.746 & 3.974 & 0.312 \\
& AllSet & 4.682 & \color{blue}3.474 & \color{blue}0.685 \\
& HMPNN & 5.294 & 3.980 & 0.571 \\
& HNHN & 5.063 & 3.750 & 0.620 \\
& Lin. Transformers & 6.315 & 4.565 & 0.127 \\
& NetlistGNN & \color{blue}4.683 & 3.520 & 0.681 \\
\cmidrule{2-5}%
& base \textbf{\DEHNN} & 4.946 & 3.720 & 0.632 \\
& \textbf{full \DEHNN} & \textbf{\color{red}4.429} & \textbf{\color{red}3.317} & \textbf{\color{red}0.723} \\
\cmidrule{2-5}%
& \textbf{Improvement} & 5.4$\%$ & 4.5$\%$ & 5.5$\%$ \\

\bottomrule
\end{tabular}
}
\caption{Results of single-design net-based demand regression for each design.}
\label{table:single-design-demand-full}
\end{table}

\begin{table}[htbp]
\centering
\resizebox{0.95\columnwidth}{!}{
\begin{tabular}{ccccc}
\toprule
\textbf{Design} & \textbf{Model} & \textbf{Precision} $\uparrow$ & \textbf{Recall} $\uparrow$ & \textbf{F\_score} $\uparrow$ \\
\midrule

\multirow{8}{*}{Superblue1} 
& GCN & 0.839 & 0.944 & 0.888 \\
& GATv2 & 0.867 & 0.944 & 0.904 \\
& HyperConv & 0.873 & 0.966 & 0.917 \\
& AllSet & \color{blue}0.880 & 0.955 & 0.916 \\
& HMPNN & 0.866 & 0.968 & 0.916 \\
& HNHN & 0.868 & \color{blue}0.969 & 0.916 \\
& Lin. Transformers & 0.853 & 0.941 & 0.895 \\
& NetlistGNN & 0.862 & 0.936 & \color{blue}0.920 \\
\cmidrule{2-5}%
& base \textbf{\DEHNN} & 0.876 & 0.967 & 0.920 \\
& \textbf{full \DEHNN} & \textbf{\color{red}0.885} & \textbf{\color{red}0.969} & \textbf{\color{red}0.925} \\
\cmidrule{2-5}%
& \textbf{Improvement} & 1.6$\%$ & 0.5$\%$ & 0.5$\%$ \\

\midrule
\multirow{8}{*}{Superblue2} 
& GCN & 0.741 & 0.657 & 0.697 \\
& GATv2 & \color{blue}0.782 & \color{blue}0.739 & \color{blue}0.760 \\
& HyperConv & 0.779 & 0.706 & 0.741 \\
& AllSet & 0.727 & 0.664 & 0.694 \\
& HMPNN & 0.730 & 0.587 & 0.649 \\
& HNHN & 0.718 & 0.633 & 0.670 \\
& Lin. Transformers & 0.752 & 0.530 & 0.621 \\
& NetlistGNN & 0.765 & 0.614 & 0.682 \\
\cmidrule{2-5}%
& base \textbf{\DEHNN} & 0.796 & 0.717 & 0.755 \\
& \textbf{full \DEHNN} & \textbf{\color{red}0.797} & \textbf{\color{red}0.767} & \textbf{\color{red}0.782} \\
\cmidrule{2-5}%
& \textbf{Improvement} & 1.2$\%$ & 9.7$\%$ & 2.9$\%$ \\

\midrule
\multirow{8}{*}{Superblue3} 
& GCN & 0.731 & 0.837 & 0.780 \\
& GATv2 & 0.768 & \textbf{\color{red}0.840} & 0.798 \\
& HyperConv & 0.770 & 0.815 & 0.792 \\
& AllSet & 0.728 & 0.773 & 0.747 \\
& HMPNN & 0.711 & 0.777 & 0.739 \\
& HNHN & 0.706 & 0.777 & 0.737 \\
& Lin. Transformers & 0.749 & 0.757 & 0.753 \\
& NetlistGNN & \color{blue}0.786 & 0.814 & \color{blue}0.799 \\
\cmidrule{2-5}%
& base \textbf{\DEHNN} & 0.791 & 0.819 & 0.805 \\
& \textbf{full \DEHNN} & \textbf{\color{red}0.817} & 0.816 & \textbf{\color{red}0.816} \\
\cmidrule{2-5}%
& \textbf{Improvement} & 0.7$\%$ & - & 2.1$\%$ \\

\midrule
\multirow{8}{*}{Superblue5} 
& GCN & 0.745 & 0.932 & 0.834 \\
& GATv2 & 0.783 & 0.923 & 0.848 \\
& HyperConv & 0.827 & 0.935 & 0.878 \\
& AllSet & 0.795 & \color{blue}0.939 & 0.861 \\
& HMPNN & 0.788 & 0.932 & 0.853 \\
& HNHN & 0.786 & 0.932 & 0.852 \\
& Lin. Transformers & 0.798 & 0.911 & 0.851 \\
& NetlistGNN & \color{blue}0.844 & 0.933 & \color{blue}0.885 \\
\cmidrule{2-5}%
& base \textbf{\DEHNN} & 0.842 & 0.938 & 0.887 \\
& \textbf{full \DEHNN} & \textbf{\color{red}0.852} & \textbf{\color{red}0.940} & \textbf{\color{red}0.894} \\
\cmidrule{2-5}%
& \textbf{Improvement} & 4.9$\%$ & 1.3$\%$ & 1.0$\%$ \\

\midrule
\multirow{8}{*}{Superblue6} 
& GCN & 0.837 & 0.921 & 0.877 \\
& GATv2 & \color{blue}0.876 & 0.920 & \color{blue}0.897 \\
& HyperConv & 0.851 & 0.916 & 0.891 \\
& AllSet & 0.817 & 0.940 & 0.874 \\
& HMPNN & 0.809 & \textbf{\color{red}0.965} & 0.879 \\
& HNHN & 0.815 & 0.947 & 0.875 \\
& Lin. Transformers & 0.833 & 0.906 & 0.868 \\
& NetlistGNN & 0.819 & 0.928 & 0.889 \\
\cmidrule{2-5}%
& base \textbf{\DEHNN} & 0.859 & 0.928 & 0.892 \\
& \textbf{full \DEHNN} & \textbf{\color{red}0.885} & 0.930 & \textbf{\color{red}0.906} \\
\cmidrule{2-5}%
& \textbf{Improvement} & 0.7$\%$ & - & 1.0$\%$ \\

\midrule
\multirow{8}{*}{Superblue7} 
& GCN & 0.839 & 0.980 & 0.904 \\
& GATv2 & 0.863 & \textbf{\color{red}0.981} & 0.920 \\
& HyperConv & \color{blue}0.899 & 0.967 & \color{blue}0.932 \\
& AllSet & 0.888 & 0.956 & 0.921 \\
& HMPNN & 0.874 & 0.962 & 0.916 \\
& HNHN & 0.875 & 0.955 & 0.913 \\
& Lin. Transformers & 0.792 & 0.870 & 0.891 \\
& NetlistGNN & 0.868 & 0.918 & 0.923 \\
\cmidrule{2-5}%
& base \textbf{\DEHNN} & 0.900 & 0.938 & 0.887 \\
& \textbf{full \DEHNN} & \textbf{\color{red}0.908} & 0.969 & \textbf{\color{red}0.937} \\
\cmidrule{2-5}%
& \textbf{Improvement} & 5.6$\%$ & - & 0.5$\%$ \\

\bottomrule
\end{tabular}

\begin{tabular}{ccccc}
\toprule
\textbf{Design} & \textbf{Model} & \textbf{Precision} $\uparrow$ & \textbf{Recall} $\uparrow$ & \textbf{F\_score} $\uparrow$ \\
\midrule

\multirow{8}{*}{Superblue9} 
& GCN & 0.684 & 0.556 & 0.613 \\
& GATv2 & \color{blue}0.719 & \color{blue}0.613 & \color{blue}0.666 \\
& HyperConv & 0.716 & 0.605 & 0.656 \\
& AllSet & 0.675 & 0.505 & 0.577 \\
& HMPNN & 0.612 & 0.418 & 0.495 \\
& HNHN & 0.664 & 0.447 & 0.529 \\
& Lin. Transformers & 0.649 & 0.551 & 0.596 \\
& NetlistGNN & \textbf{\color{red}0.778} & 0.568 & 0.656 \\
\cmidrule{2-5}%
& base \textbf{\DEHNN} & 0.740 & 0.647 & 0.690 \\
& \textbf{full \DEHNN} & 0.695 & \textbf{\color{red}0.653} & \textbf{\color{red}0.673} \\
\cmidrule{2-5}%
& \textbf{Improvement} & - & 6.5$\%$ & 1.1$\%$ \\

\midrule
\multirow{8}{*}{Superblue11} 
& GCN & 0.634 & 0.896 & 0.743 \\
& GATv2 & \color{blue}0.706 & 0.844 & 0.769 \\
& HyperConv & 0.644 & 0.910 & 0.755 \\
& AllSet & 0.620 & \textbf{\color{red}0.946} & 0.749 \\
& HMPNN & 0.627 & 0.927 & 0.748 \\
& HNHN & 0.628 & 0.897 & 0.738 \\
& Lin. Transformers & 0.671 & 0.691 & 0.680 \\
& NetlistGNN & 0.691 & 0.914 & \color{blue}0.787 \\
\cmidrule{2-5}%
& base \textbf{\DEHNN} & 0.677 & 0.863 & 0.759 \\
& \textbf{full \DEHNN} & \textbf{\color{red}0.719} & 0.850 & \textbf{\color{red}0.789} \\
\cmidrule{2-5}%
& \textbf{Improvement} & 1.1$\%$ & - & 0.3$\%$ \\

\midrule
\multirow{8}{*}{Superblue14} 
& GCN & 0.763 & \color{blue}0.891 & 0.822 \\
& GATv2 & \color{blue}0.834 & 0.889 & \color{blue}0.860 \\
& HyperConv & 0.830 & 0.886 & 0.857 \\
& AllSet & 0.809 & 0.863 & 0.835 \\
& HMPNN & 0.819 & 0.858 & 0.838 \\
& HNHN & 0.800 & 0.855 & 0.826 \\
& Lin. Transformers & 0.735 & 0.764 & 0.749 \\
& NetlistGNN & 0.827 & 0.870 & 0.787 \\
\cmidrule{2-5}%
& base \textbf{\DEHNN} & \textbf{\color{red}0.856} & 0.876 & 0.866 \\
& \textbf{full \DEHNN} & \textbf{\color{red}0.856} & \textbf{\color{red}0.902} & \textbf{\color{red}0.878} \\
\cmidrule{2-5}%
& \textbf{Improvement} & 3.7$\%$ & 10.6$\%$ & 2.1$\%$ \\

\midrule
\multirow{8}{*}{Superblue16} 
& GCN & 0.713 & 0.926 & 0.807 \\
& GATv2 & \color{blue}0.864 & \textbf{\color{red}0.928} & \color{blue}0.894 \\
& HyperConv & 0.855 & 0.833 & 0.844 \\
& AllSet & 0.844 & 0.827 & 0.833 \\
& HMPNN & 0.838 & 0.821 & 0.829 \\
& HNHN & 0.831 & 0.819 & 0.822 \\
& Lin. Transformers & 0.810 & 0.831 & 0.815 \\
& NetlistGNN & 0.779 & 0.912 & 0.865 \\
\cmidrule{2-5}%
& base \textbf{\DEHNN} & 0.874 & 0.823 & 0.847 \\
& \textbf{full \DEHNN} & \textbf{\color{red}0.895} & 0.910 & \textbf{\color{red}0.903} \\
\cmidrule{2-5}%
& \textbf{Improvement} & 8.5$\%$ & - & 1.0$\%$ \\

\midrule
\multirow{8}{*}{Superblue18} 
& GCN & 0.779 & 0.845 & 0.811 \\
& GATv2 & 0.798 & 0.848 & 0.822 \\
& HyperConv & 0.790 & 0.888 & 0.836 \\
& AllSet & 0.753 & 0.892 & 0.816 \\
& HMPNN & 0.763 & 0.888 & 0.821 \\
& HNHN & 0.749 & 0.881 & 0.810 \\
& Lin. Transformers & 0.775 & 0.852 & 0.812 \\
& NetlistGNN & \textbf{\color{red}0.868} & \textbf{\color{red}0.939} & \textbf{\color{red}0.902} \\
\cmidrule{2-5}%
& base \textbf{\DEHNN} & 0.798 & 0.890 & 0.842 \\
& \textbf{full \DEHNN} & 0.807 & 0.886 & 0.845 \\
\cmidrule{2-5}%
& \textbf{Improvement} & - & - & - \\

\midrule
\multirow{8}{*}{Superblue19} 
& GCN & 0.812 & 0.894 & 0.851 \\
& GATv2 & 0.857 & 0.899 & \color{blue}0.878 \\
& HyperConv & \color{blue}0.879 & 0.877 & \color{blue}0.878 \\
& AllSet & 0.870 & 0.866 & 0.868 \\
& HMPNN & 0.861 & 0.862 & 0.862 \\
& HNHN & 0.865 & 0.835 & 0.848 \\
& Lin. Transformers & 0.809 & 0.880 & 0.843 \\
& NetlistGNN & 0.869 & \textbf{\color{red}0.946} & 0.856 \\
\cmidrule{2-5}%
& base \textbf{\DEHNN} & 0.883 & 0.885 & 0.884 \\
& \textbf{full \DEHNN} & \textbf{\color{red}0.895} & 0.910 & \textbf{\color{red}0.903} \\
\cmidrule{2-5}%
& \textbf{Improvement} & 0.4$\%$ & - & 2.8$\%$ \\

\bottomrule
\end{tabular}
}
\caption{Results of single-design cell-based congestion classification for each design.}
\label{table:single-design-classify-full}
\end{table}

\begin{table}[htbp]
\vspace{-0.3cm}
\centering
\resizebox{1.0\columnwidth}{!}{
\begin{tabular}{c|ccc|ccc|ccc|ccc}
\toprule
& \multicolumn{3}{c|}{Single-Design without placement} & \multicolumn{3}{c|}{Single-Design with placement} & \multicolumn{3}{c|}{Cross-Design without placement} & \multicolumn{3}{c}{Cross-Design with placement} \\
\midrule
\textbf{Model} & RMSE & MAE & Pearson & RMSE (imp.)& MAE (imp.)& Pearson (imp.)& RMSE & MAE & Pearson & RMSE (imp.)& MAE (imp.)& Pearson (imp.)\\
\midrule
\large GCN & \large 5.034 & \large 3.734 & \large 0.616 & \large 4.495 (10.7\%) & \large 3.334 (10.7\%) & \large 0.717 (16.4\%) & \large 6.571 & \large 5.024 & \large 0.365 & \large 6.126 (6.8\%) & \large 4.709 (6.3\%) & \large 0.440 (20.5\%) \\
\large GATv2 & \large 4.949 & \large 3.691 & \large 0.636 & \large 4.382 (11.4\%) & \large 3.112 (15.7\%) & \large 0.758 (19.2\%) & \large 6.623 & \large 5.137 & \large 0.363 & \large 5.812 (10.8\%) & \large 4.695 (8.6\%)& \large 0.442 (21.8\%)\\
\midrule
\large full DE-HNN & \color{red}\large 4.429 & \color{red}\large 3.317 & \color{red}\large 0.723 & \color{red}\large 4.005 (9.6\%) & \color{red}\large 2.987 (10.0\%)& \color{red}\large 0.785 (8.6\%) & \color{red}\large 6.037 & \color{red}\large 4.670 & \color{red}\large 0.372 & \color{red}\large 5.795 (4.0\%)& \color{red}\large 4.337 (7.1\%)& \color{red}\large 0.452 (21.5\%) \\
\bottomrule
\end{tabular}
}
\caption{Results of net-based demand regression for \textbf{Superblue19}. For each metric, the \textbf{(imp.)} refers to the improvements when placement information added.}
\vspace{-0.8cm}
\label{table:placement_test}
\end{table}

\begin{table}[htbp]
\centering
\resizebox{1.0\columnwidth}{!}{
\begin{tabular}{ccccc}
\cmidrule{3-5}
& & \multicolumn{3}{c}{\textbf{Wirelength Regression}} \\
\midrule
\textbf{Design} & \textbf{Model} & \textbf{RMSE} $\downarrow$ & \textbf{MAE} $\downarrow$ & \textbf{Pearson} $\uparrow$  \\
\midrule

\multirow{8}{*}{Superblue19} 
& GCN & \color{blue}1.691 & \color{blue}1.276 & \color{blue}0.746  \\
& GATv2 & 1.717 & 1.281 & 0.737 \\
& Lin. Transformer & 2.159 & 1.588 & 0.521 \\
& NetlistGNN & 1.762 & 1.324 & 0.718 \\
\cmidrule{2-5}
& HyperConv & 2.390 & 1.788 & 0.558 \\
& Allset & 1.837 & 1.348 & 0.695 \\
& HMPNN & 1.785 & 1.335 & 0.710 \\
& HNHN & 1.754 & 1.333 & 0.701 \\
\cmidrule{2-5}
& base \textbf{\DEHNN} & 1.731 & 1.291 & 0.730 \\
& \textbf{full \DEHNN} & \color{red}1.677 & \color{red}1.242 & \color{red}0.754 \\
\cmidrule{2-5}
& \textbf{Improvement} & 1.9$\%$ & 2.6$\%$ & 1.8$\%$ \\

\bottomrule
\end{tabular}

\begin{tabular}{ccc}
\toprule
\multicolumn{3}{c}{\textbf{Demand Regression}} \\
\midrule
\textbf{RMSE} $\downarrow$ & \textbf{MAE} $\downarrow$ & \textbf{Pearson} $\uparrow$  \\
\midrule

6.571 & 5.024 & 0.365 \\
6.623 & 5.137 & 0.363 \\
6.564 & 4.819 & 0.086 \\
8.328 & 6.839 & \color{blue}0.367 \\
\cmidrule{1-3}
8.569 & 5.294 & 0.241 \\
\color{blue}6.120 & \color{blue}4.820 & 0.345 \\
6.979 & 5.356 & 0.306 \\
6.390 & 4.870 & 0.358 \\
\cmidrule{1-3}
6.778 & 5.085 & 0.337 \\
\color{red}6.037 & \color{red}4.670 & \color{red}0.372 \\
\cmidrule{1-3}
1.4$\%$ & 4.1$\%$ & 1.4$\%$ \\

\bottomrule
\end{tabular}

\begin{tabular}{ccc}
\toprule
\multicolumn{3}{c}{\textbf{Congestion Classification}} \\
\midrule
\textbf{Precision} $\uparrow$ & \textbf{Recall} $\uparrow$ & \textbf{F\_score} $\uparrow$  \\
\midrule

0.633 & 0.997 & 0.773  \\
0.630 & \color{red}0.999 & 0.765 \\
0.618 & 0.859 & 0.772 \\
0.647 & 0.953 & 0.771 \\
\cmidrule{1-3}
\color{blue}0.655 & 0.923 & \color{blue}0.778 \\
0.645 & 0.964 & 0.773 \\
0.633 & \color{red}0.999 & 0.773 \\
0.648 & 0.939 & 0.767 \\
\cmidrule{1-3}
0.653 & 0.990 & 0.774 \\
\color{red}0.660 & 0.986 & \color{red}0.780 \\
\cmidrule{1-3}
0.7$\%$ & - & 0.3$\%$ \\

\bottomrule
\end{tabular}

}
\caption{Results of cross-design net-based hpwl(wirelength) regression, net-based demand regression and cell-based congestion classification for different netlist design, including comparisons with other HNN models.}
\vspace{-0.5cm}
\label{table:cross-design-hpwl-full}
\end{table}

\end{document}